\title{Can convolutional ResNets approximately preserve input distances? A frequency analysis perspective}
\author{%
  Lewis Smith\\
  OATML Group \\
  Department of Computer Science\\
  University of Oxford \\
  Oxford, United Kingdom \\
  \texttt{lsgs@robots.ox.ac.uk} \\
  \And
  Joost van Amersfoort\\
  OATML Group \\
  Department of Computer Science\\
  University of Oxford \\
  Oxford, United Kingdom \\
  \texttt{joost.van.amersfoort@cs.ox.ac.uk} \\
  \And
  Haiwen Huang \\
  OATML Group \\
  Department of Computer Science \\
  University of Oxford \\
  Oxford, United Kingdom \\
  \texttt{haiwen.huang2@cs.ox.ac.uk} \\
  \And
  Stephen Roberts \\
  Machine Learning Research Group \\
  Department of Engineering Science \\
  University of Oxford \\
  Oxford, United Kingdom \\
  \texttt{sjrob@robots.ox.ac.uk}
  \And
  Yarin Gal\\
  OATML Group \\
  Department of Computer Science\\
  University of Oxford \\
  Oxford, United Kingdom \\
  \texttt{yarin@cs.ox.ac.uk} \\
}
\newtheorem{lem}{Lemma}
\newtheorem{thm}{Theorem}
\DeclareMathOperator{\relu}{ReLU}
\DeclareMathOperator{\lip}{Lip}
\newcommand{\R}{\mathbb{R}}
\begin{document}
\maketitle

\begin{abstract}

	ResNets constrained to be bi-Lipschitz, that is, approximately distance preserving, have been a crucial component of recently proposed techniques for deterministic uncertainty quantification in neural models.
	We show that theoretical justifications for recent regularisation schemes trying to enforce such a constraint suffer from a crucial flaw -- the theoretical link between the regularisation scheme used and bi-Lipschitzness is only valid under conditions which do not hold in practice, rendering existing theory of limited use, despite the strong empirical performance of these models.
	We provide a theoretical explanation for the effectiveness of these regularisation schemes using a frequency analysis perspective, showing that under mild conditions these schemes will enforce a lower Lipschitz bound on the low-frequency projection of images. We then provide empirical evidence supporting our theoretical claims, and perform further experiments which demonstrate that our broader conclusions appear to hold when some of the mathematical assumptions of our proof are relaxed, corresponding to the setup used in prior work. In addition, we present a simple constructive algorithm to search for counter examples to the distance preservation condition, and discuss possible implications of our theory for future model design.

\end{abstract}

\section{Introduction}

A recent family of related methods have been proposed for obtaining SotA uncertainty estimates in deep learning in a single forward pass (i.e. not sampling or ensembling) \citep{van2020uncertainty,liu2020simple,van2021improving,mukhoti2021deterministic}.
The key feature these models share is the use of a neural mapping followed by a distance-sensitive output layer, such as a Gaussian process with a shift-invariant kernel.
These output layers revert to a default prior far away from the data; we would like to preserve this behaviour in neural network models.
However, achieving this with a standard neural network is not easy, as the neural part of the model is not constrained to preserve the distances in the original space.
In particular, two points $x$ and $y$ in the input space may be mapped by the feature mapping $f$ to any arbitrary locations in feature space - there is no relationship between the distance $||x - y||_2$ to the distance in the feature space $||f(x) - f(y)||$.
This means that using a layer which reverts to a uniform prediction away from a given point in the \textit{feature} space $f(x)$ does not guarantee any controls on the behaviour of uncertainty, as even points far away from the training data in input space may be mapped close to the feature representation of training points, and thus have high confidence under any method based on the feature space distances.
This is called `feature collapse'.

Previous work has resolved this problem by using a spectral regularisation scheme, in combination with a residual network structure, motivated by the following argument.
If we parameterise our feature mapping as $f(x) = x + g(x)$ , and then apply spectral regularisation \citep{miyato2018spectral,gouk2018regularisation} to the function $g$ such that its Lipschitz constant is less than 1, then $f$ is guaranteed to be \textit{bi-Lipschitz}, that is, to satisfy:
\begin{equation}
	L_1 d_{X}(x_{1}, x_{2}) \leq d_{Y}\left(f(x_{1}), f(x_{2})\right) \leq L_2 d_{X}(x_{1}, x_{2}),
	\label{eq:bilipschitz}
\end{equation}
for $0< L_1 < L_2$ \citep{behrmann2019invertible}.
A bi-Lipschitz condition bounds the distortion of distances in the input space.
This means that points sufficiently far apart in input space are guaranteed to be far apart in the feature space, while still allowing the model to learn a feature mapping which minimises the task loss.
As a bi-Lipschitz constraint would prevent feature collapse entirely, much of the single forward pass uncertainty work has adopted this approach of applying spectral normalisation to residual connections, often with this argument explicitly cited as motivation\citep{van2021improving,liu2020simple,mukhoti2021deterministic}.

To obtain scalability and high accuracy, however, two significant departures from this theory are made.
The first is to use dimensionality reduction.
To achieve good performance on image data, all prior work on this scheme for uncertainty includes layers which reduce the dimensionality of the input, such as strided convolutions.
However, as we show in Theorem \ref{thm:bilip}, if the input dimension is larger than the output dimension then the overall function \emph{cannot be bi-Lipschitz}.
Therefore, explanations based on the bi-Lipschitzness of the feature mapping, as put forward in prior work \citep{liu2020simple} are insufficient for explaining the strong empirical effectiveness of this scheme, as they do not apply to the model architectures actually used.
The second is the use of spectral normalisation with constants \textit{larger} than 1.
In this case there is no guarantee of bi-Lipschitzness.
However, most single pass uncertainty work finds that allowing the Lipschitz constant to be larger than 1 works better empirically, and still provides a benefit to the uncertainty quality.

Since we show that networks with downsampling layers cannot be bi-Lipschitz, we have to look for a different explanation for the strong performance of these models in practice.
As we explain in section \ref{section:decimation}, all downsampling layers in common use in convolution networks can be interpreted as distance-preserving operations followed by \emph{decimation}.
Decimation introduces feature collapse, or \emph{aliasing} in frequency content above the Nyquist rate of the decimated signal.
This suggests that we should analyse feature collapse in terms of the frequency content of images.

Since we know that the subsampling step introduces aliasing between images with frequency content above the Nyquist limit, it is natural to ask what effect a residual block has on the \emph{frequency content} of an image.
In section \ref{section:frequency_maths}, we establish mathematically that, under some relatively mild conditions, residual blocks must preserve the distance between \emph{low pass filtered} versions of their inputs.
This is relevant because the low frequency components of an image can be preserved under decimation, so if residual blocks preserve low-frequency distances, these distances will still be preserved after a spatial dimension reduction layer like a strided convolution.

Our contributions are as follows
\begin{enumerate}\itemsep0em
	\item We show that models including downsampling layers cannot be bi-Lipschitz, demonstrating a critical flaw with this justification of the regularisation scheme.
	\item We prove, under mild sufficient conditions, that we can establish a lower bound on the low-frequency distance between the feature maps of residual blocks given the low-frequency distance between their inputs. Since the low-frequency components of images will pass through a downsampling operation unchanged, this establishes a lower bound on the feature collapse even after downsampling.
	\item We describe a simple constructive algorithm for finding image pairs which exhibit feature collapse, that is, where $||x - y||$ is large but $f(x) \simeq f(y)$.
	\item We verify these theoretical claims empirically, investigate how they behave when the mathematical conditions of our proof are relaxed, and discuss the implications of these findings for future model design.
\end{enumerate}
\section{Theoretical Analysis}
\subsection{Why downsampling mappings cannot be bi-Lipschitz}
\label{section:no_bilip}

Firstly, we present a proof of why models which are not dimensionality preserving cannot be bi-Lipschitz, demonstrating that explaining the success of the regularisation scheme used by prior work needs further analysis.
Then, we motivate analysing this problem in terms of the frequency domain in section \ref{section:decimation}, briefly introducing necessary concepts, before outlining the argument behind our main theoretical claims in section \ref{section:frequency_maths}.
For brevity, we have focused on the outline of this argument and state results without proof in the main body; detailed proofs of all relevant theorems can be found in Appendix \ref{appendix:proofs}.

We first state one of our main claims - that dimensionality reducing models cannot be bi-Lipschitz.
We do not necessarily think that this observation is particularly novel mathematically, as it follows directly from two basic facts of analysis and topology; that Bi-Lipschitz mappings are homeomorphisms \citep{dructu2018geometric} and that $\R^n$ and $\R^m$ are not homeomorphic unless $m = n$ \citep{brouwer1911beweis}.
However, it appears to have been overlooked in the literature using this scheme for supervised models \citep{liu2020simple, van2020uncertainty, van2021improving, mukhoti2021deterministic}), and so we provide a straightforward proof of the following theorem
\begin{thm}
	\label{thm:bilip}
	Let $f$ be a function from $\mathbb{R}^n$ to $\mathbb{R}^m$, with $m < n$. Then $f$ is not bi-Lipschitz.
\end{thm}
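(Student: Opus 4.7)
The plan is to derive a contradiction from assuming $f : \mathbb{R}^n \to \mathbb{R}^m$ is bi-Lipschitz when $n > m$. Two natural routes are available: a topological one using invariance of domain (which the paper hints at), and an elementary volumetric packing argument. I would favour the packing approach since it is completely self-contained and gives a more concrete sense of \emph{why} dimensionality reduction is incompatible with bi-Lipschitzness.

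First, I would fix the Lipschitz constants $0 < L_1 \leq L_2$ from Equation~\ref{eq:bilipschitz} and consider the ball $B_n(0, r) \subset \mathbb{R}^n$ of radius $r$. For any $\epsilon \in (0, r)$, a standard greedy packing (or equivalently a regular grid of spacing $\epsilon/\sqrt{n}$ restricted to the ball) produces an $\epsilon$-separated subset $S \subset B_n(0,r)$ with $|S| \geq c_n (r/\epsilon)^n$, where $c_n > 0$ depends only on $n$.

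Next I would use the two Lipschitz inequalities to push $S$ into $\mathbb{R}^m$ and extract a contradiction. The upper bound gives $f(S) \subset B_m(f(0), L_2 r)$, while the lower bound ensures that $f(S)$ is an $L_1 \epsilon$-separated subset of $\mathbb{R}^m$. A routine volume/covering argument bounds the size of any $L_1 \epsilon$-separated set inside a ball of radius $L_2 r$ in $\mathbb{R}^m$ by $C_m (L_2 r / L_1 \epsilon)^m$ for some constant $C_m$. Combining the two estimates yields
$$c_n (r/\epsilon)^n \;\leq\; C_m (L_2/L_1)^m (r/\epsilon)^m,$$
which fails once $r/\epsilon$ is large enough, since $n > m$ makes the left-hand side grow strictly faster. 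Choosing any $r$ and letting $\epsilon \to 0$ (or, equivalently, fixing $\epsilon$ and letting $r \to \infty$) delivers the contradiction.

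The main obstacle is essentially bookkeeping: pinning down the packing and covering constants cleanly, and being careful that the separated set really does survive both the upper and the lower bound simultaneously. Nothing deep is needed beyond elementary Euclidean geometry, and no assumption on the regularity of $f$ beyond the bi-Lipschitz inequality itself. As a shorter but less elementary alternative, one could remark that bi-Lipschitz maps preserve Hausdorff dimension, so the image $f(\mathbb{R}^n) \subset \mathbb{R}^m$ would need to have Hausdorff dimension $n > m$, which is impossible; this offers a one-line proof but relies on measure-theoretic machinery that the packing version avoids.
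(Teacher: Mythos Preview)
Your packing/covering argument is correct and self-contained; it genuinely differs from the paper's proof. The paper takes the topological route you allude to: it first shows that any bi-Lipschitz surjection is a homeomorphism (injectivity from the lower bound, continuity of the inverse from swapping the roles of the two inequalities), and then invokes Brouwer's invariance of domain to conclude that no open subset of $\R^n$ admits a continuous injection into $\R^m$ when $n>m$. The contradiction is obtained at the level of topological dimension rather than metric entropy.

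Your approach buys elementariness: nothing beyond volume comparison is needed, whereas the paper leans on invariance of domain, a nontrivial result from algebraic topology that is stated without proof. The paper's route, on the other hand, makes explicit the structural reason the map fails---bi-Lipschitz maps are homeomorphisms onto their image, and Euclidean spaces of different dimension are not homeomorphic---which connects more directly to the invertibility literature the paper is drawing on (e.g.\ \citet{behrmann2019invertible}). Your Hausdorff-dimension one-liner is yet a third route, and is arguably the slickest once one accepts that bi-Lipschitz maps preserve Hausdorff dimension; the paper does not mention it.
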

This result is general, but our proof is non-constructive, so it is worth obtaining more intuition on why this must be the case by considering how to find counter-examples.
A constructive argument also suggests a constructive \emph{algorithm}, which we can use to find examples of feature collapse as a way to check our theoretical claims, which we consider in section \ref{section:counter_examples}.
If our function $f: \R^n \to \R^m$ is continuous and differentiable, then we can consider its local Jacobian $J(x) = \nabla_x f(x) \in \R^m \times \R^n$.
The value of the function around this point can be approximated by the Taylor series $f(x + t) = f(x) + J(x) t + \mathcal{O}(||t||^2)$.
If we consider the singular value decomposition of the Jacobian $J = U S V^\top$, there will only be $m$ columns of $V$ (the right singular vectors) corresponding to non-zero singular values.
The remaining $n - m$ columns of $V$ form a basis for the null space of $J$, that is, the set of vectors $\{ v \mid J v = 0\}$.
Therefore, starting at any point $x$, we can take a step $t \in \{ v \mid J(x) v = 0\}$ in any direction in this null space and have that $f(x) - f(x + t) = \mathcal{O}(||t||^2)$ from the Taylor expansion, since $J t = 0$.
We can repeat this process iteratively with $n$ steps $t_i$ of length $\epsilon$, such that each $t_i$ lies in the null space of the Jacobian $J(x + \sum_1^{i-1} t_i)$ to generate, starting from a point $x_0$, a counterpart $x_n = x_0 + \sum_{i=1}^n t_i$.
We can freely choose $t_i$ such that $\langle t_i, t_j \rangle \ge 0$ for all $i, j$, so that the steps do not cancel each other out.
In this case, the total distance covered by the trajectory, $|| x_0 - x_n|| = || \sum_{i=1}^n t_i|| \simeq \mathcal{O}(\epsilon n) $, whereas the corresponding change in feature space is $||f(x_0) - f(x_n)|| \simeq \mathcal{O}(n \epsilon ^2)$.
It is clear from this that by adding more steps to this trajectory and making $\epsilon$ arbitrarily small while holding $n \epsilon = C$ constant, the effect on the output can be made arbitrarily small ($\mathcal{O}(C \epsilon)$) while the length of the trajectory remains $\mathcal{O}(C)$, so we obtain a recipe for finding two points $x_0$ and $x_n$ with an arbitrarily small distance in feature space.
This argument is a little less mathematically precise than our proof of Theorem \ref{thm:bilip}, but it does suggest a constructive algorithm for finding \emph{examples} of feature collapse.
We discuss how to make this algorithm practical in section \ref{section:counter_examples}.

\subsection{Downsampling and feature collapse in convolutional networks}
\label{section:decimation}
Theorem \ref{thm:bilip} straightforwardly applies to models which treat the input as a vector in an unstructured space.
However, we are interested in particular in convolutional models, where the feature maps and downsampling operations have a spatial structure.
In particular, as observed in \citet{zhang2019making}, downsampling operations in convolutional networks - including average pooling, strided convolutions, and max pooling - can be thought of as a dimensionality preserving dense operation followed by an image decimation step, where we simply keep every $n^{th}$ sample from the output.
For instance, strided convolution, which is non-dimensionality preserving and thus non-invertible (and of particular interest since this is the downsampling operation most commonly used in modern ResNets) is exactly identical to a standard convolution followed by an image subsampling step where we drop pixels from the output; in other words, the following two Python snippets should give identical values for \texttt{y};
%
%
\begin{minipage}{0.4\textwidth}
	\begin{lstlisting}[language=Python]

y = conv2d(x, k, stride=2)

\end{lstlisting}
\end{minipage}
\vline\hspace{1em}
\begin{minipage}{0.48\textwidth}
	\begin{lstlisting}[language=Python]
def decimate(x, s):
    return x[..., ::s, ::s]

y = decimate(conv2d(x, k, stride=1), 2)
\end{lstlisting}
\end{minipage}\\
This lets us think of a ResNet as consisting entirely of residual blocks (which can be regularised to be bi-Lipschitz mappings) and a \textit{non-learnable} decimation operation.
\begin{figure}[h]
	\centering
	\includegraphics[width=.6\textwidth]{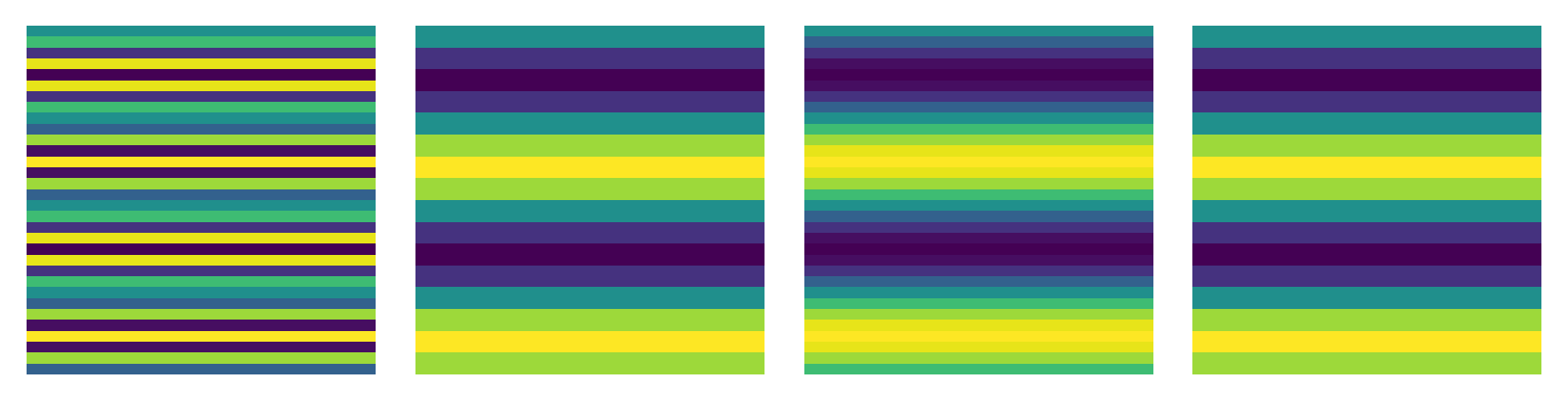}
	\caption{An example of two Fourier modes which are aliases of each other. From left to right; a Fourier basis function above the Nyquist rate, the same image decimated by a factor 2, a basis function below the Nyquist rate, and the second mode downsampled by a factor 2.
		After decimation, these modes become indistinguishable. Note that the image above the Nyquist frequency is aliased into a much lower frequency, while the second mode is still resolved correctly - using BlurPool avoids aliasing, instead mapping all frequencies above the Nyquist frequency to (approximately) zero.
	}
	\label{fig:alias_example}
\end{figure}
The effects of decimation on a signal are well understood. In particular, sufficient conditions for this downsampling operation to produce \emph{no} feature collapse/aliasing are known.
The Nyquist-Shannon theorem states that, for a given sampling frequency $u_s$, we can perfectly reconstruct a signal assuming it contains no frequencies above the Nyquist rate $u_s / 2$.
See Appendix \ref{section:nyquistshannon} for a more detailed discussion of decimation and aliasing.

This means that if we have two signals (such as feature maps in a neural network) $x$ and $y$ which differ in their content \emph{below the Nyquist rate}, then after decimation they will still be distinguishable, since their low-frequency content will be preserved by this operation.
Feature collapse between $x$ and $y$ will only be induced by decimation if the difference between $x$ and $y$ is all above the Nyquist frequency, as this will be removed by the filtering operation we apply before decimation.
%
%
%
\subsection{Residual networks and the frequency content of images}
\label{section:frequency_maths}

As discussed above, all downsampling operations in ResNets can be understood as non-strided operations followed by decimation.
The effects of decimation, combined with anti-aliasing, in the frequency domain are well understood - frequencies below the Nyquist will be unaffected, while those above will be removed from the signal.
But in order to establish anything about the implications of this, we need to connect the frequency content of the input of a residual block to the frequency content of its output.
This section addresses this problem, and uses it to prove our main theoretical claims about the effect of spectral normalisation in (convolutional) residual networks.

Here, by `residual block', we mean a function of the form $f(x) = x + g(x)$, where the residual connection $g = g_1 \circ g_2 \circ ... g_n$, and the components $g_i$ are all convolutions, ReLUs or batch normalisation layers.
We use the following notation; $x * y = (x * y)(t) = \int x(\tau) y(t - \tau) d\tau $ denotes the convolution of the functions $x$ and $y$. A capital letter represents the Fourier transform of a signal, so $X$ denotes the Fourier transform of the signal $x$. $H_u$ is an (ideal) low-pass filter, removing all frequencies in a signal above the cutoff frequency $u$. $x \cdot y$ denotes pointwise multiplication.

Note that the following discussion, which is based on the Fourier domain, considers only a single channel.
However, this is acceptable because downsampling and nonlinearities also act independently on the channels, and so we can consider the channels independently.
Convolutions may mix channels, but, as explained below, they cannot mix frequencies, and we can bound the Lipschitz constant of multi-channel images effectively.

Residual networks consist of identity mappings, addition, convolutions, nonlinearities (ReLUs) and batch normalisation.
Identity mappings have no effect on the image, so cannot change the frequency content.
Pointwise addition is linear, so will simply add the frequency content of the two images.
\textit{Convolutions} clearly can modify the frequency content of images, but in limited ways.
Since convolutions are diagonalised by the Fourier transform, they act in the frequency domain as independent pointwise multiplication of the frequency components of the input signal with the corresponding frequency components of the convolution kernel $K$.
Batch normalisation multiplies each channel by a scalar and adds a bias; this clearly can not change the relative scale of the frequencies (other than the 0 frequency term, which is affected by the bias).

The only operation left to deal with is the nonlinearity.
ReLUs are a nonlinear filter, and so they \emph{can} modify the frequency content in a more complex way.
In fact, a ReLU acts as a \textit{convolution} in the frequency domain.
To see this, consider an image $x$, and let $m = I[x > 0]$ be the binary mask of the locations in the image where the ReLU is active. Then we can write the ReLU as a pointwise multiplication, $\relu(x) = x \cdot m(x)$.
Because, by the convolution theorem, pointwise multiplication in the spatial domain must be convolution in the frequency domain, this means that the action of the ReLU on $X$ will be a convolution of $X$ with the Fourier transform $M(x)$ of the image mask, so $\relu(X) = M(x) * X$ (see figure \ref{fig:fourier_relu}).

\begin{figure}[t]
	\centering
	\includegraphics[width=0.8\textwidth]{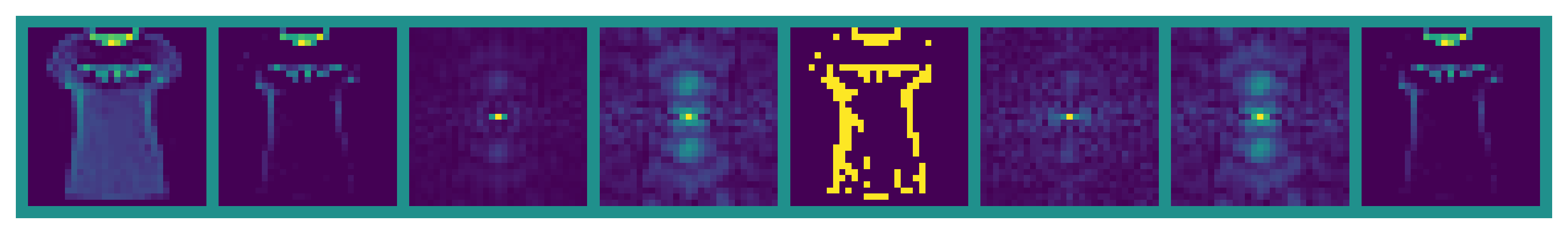}
	\caption{An image demonstrating the frequency domain representation of the ReLU, on an image from FashionMNIST.
		From left to right; a) the original image $x$, normalised to have zero mean and unit std. dev. so that the
		ReLU has a non-trivial effect. b) $z = \max(x,0)$ c) The FT of $x$, $X$. d) $Z$, calculated in the spatial domain. e) the mask $m = I[x > 0]$. f) The spectrum of the mask, $M$. g) $M * F(X)$, calculated by direct convolution in the frequency domain. Observe this is identical to image c. h) the inverse Fourier transform of image g, which again recovers b. We plot only the magnitude of the complex-valued Fourier spectra.
	}
	\label{fig:fourier_relu}
\end{figure}

If we place some sufficient conditions on the low frequency content of an image, we already have enough to show that residual connections will preserve low frequency distances.
First, we establish the following simple lemma about the convolution of distributions.
We can easily show that if a particular interval `dominates' a measure, then the convolution will reduce the amount it dominates by, which fits the intuition that convolution is a smoothing operation.
In this paper, we say an interval dominates a measure if there is more mass in that interval than in any other interval of the same size.
\begin{lem}
	Let $f$ and $g$ be (normalised) measures.
	Let an interval of length $L$ starting at $x$ be a `dominant' interval in $f$; that is, if $\int_x^{x+L} f(t) dt = C$, then $\int_y^{y+L} f(t) dt \le C \forall y$ if [x, x + L] is a dominant interval.
	Then the mass in that dominant interval is reduced (or stays the same) when $f$ is convolved with $g$, whatever $g$ is; that is $\int_x^{x+L} [f * g](t) dt \le C$.
	\label{lem:conv_concentration}
\end{lem}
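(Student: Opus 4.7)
The plan is to compute the mass of $f*g$ on the dominant interval by swapping the order of integration (Fubini), then use translation invariance of Lebesgue measure to rewrite the inner integral of $f$ as a mass of $f$ over a translated interval of the same length $L$. At that point the dominance hypothesis kicks in: every such translated interval has mass at most $C$, so the bound can be pulled outside, leaving only the total mass of $g$, which is $1$ since $g$ is normalized.

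Concretely, I would start by writing
\begin{equation*}
\int_x^{x+L} (f*g)(t)\, dt = \int_x^{x+L} \int f(t-s)\, g(s)\, ds\, dt,
\end{equation*}
and then apply Fubini (which is justified since $f$ and $g$ are nonnegative, so Tonelli applies unconditionally) to obtain
\begin{equation*}
\int g(s) \left( \int_x^{x+L} f(t-s)\, dt \right) ds = \int g(s) \left( \int_{x-s}^{x-s+L} f(u)\, du \right) ds
\end{equation*}
after the substitution $u = t - s$. Now the inner integral is the mass of $f$ on an interval of length $L$ starting at $x - s$, which by the dominance assumption is at most $C$. Bounding the inner integral by $C$ uniformly in $s$ and pulling out the constant yields $C \int g(s)\, ds = C$, since $g$ is normalized.

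There is essentially no main obstacle: the only thing to be careful about is ensuring that $f$ and $g$ are nonnegative so that Tonelli's theorem applies without integrability worries, and that "normalized measure" is interpreted as having total mass one (so that $\int g = 1$). If instead one wants to allow signed or complex-valued $g$, the argument still goes through with $C$ replaced by $C \|g\|_1$; but the statement as given assumes $g$ is a normalized (probability) measure, so the cleaner bound suffices. I would also remark that the same argument works without change in higher dimensions, replacing intervals by translates of any fixed measurable set, which is the form the lemma will likely be used in later when applied to the Fourier convolution induced by the ReLU mask.
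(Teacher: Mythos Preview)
Your proposal is correct and matches the paper's own proof essentially step for step: expand the convolution, swap the order of integration, substitute to exhibit the inner integral as the mass of $f$ over a translated length-$L$ interval, bound by $C$ via dominance, and conclude using $\int g = 1$. Your added remarks on Tonelli, the unnormalised case, and the higher-dimensional extension are sound and go slightly beyond what the paper writes.
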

Note that although Lemma \ref{lem:conv_concentration} is stated in terms of normalised distributions for simplicity, it generalises easily to cases where $g$ is an unnormalised measure.

We can now state a key supporting result.
\begin{lem}
	Let $H_u$ be defined as above, let $x, y$ be images, and let $v = x - y$. Assume that the difference image, $v$, is dominated (in the sense of Lemma \ref{lem:conv_concentration}) by frequencies below a cutoff frequency $u$.
	Then $||H_u(\relu(x) - \relu(y))|| < ||H_u(x) - H_u(y)||$.
	\label{lem:relu_contraction}
\end{lem}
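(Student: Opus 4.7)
The plan is to express $\relu(x) - \relu(y)$ as the pointwise product of the difference $v = x - y$ with a multiplier $m$ taking values in $[0,1]$, push the identity through to the frequency domain via the convolution theorem, and then apply Lemma~\ref{lem:conv_concentration} to conclude that convolution with $M = \mathcal{F}[m]$ cannot increase the mass that $|V| = |\mathcal{F}[v]|$ places on its dominant low-frequency band.

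First, since $\relu$ is $1$-Lipschitz I would define, pointwise in $t$, $m(t) = (\relu(x(t)) - \relu(y(t)))/(x(t) - y(t))$ when $x(t) \ne y(t)$ and $m(t) = 0$ otherwise; a short case analysis on the signs of $x(t)$ and $y(t)$ shows $m(t) \in [0,1]$ in every case, yielding the factorisation $\relu(x) - \relu(y) = m \cdot v$. The convolution theorem then gives $\mathcal{F}[\relu(x) - \relu(y)] = M * V$. The ideal low-pass filter $H_u$ acts in frequency as multiplication by the indicator of $\{|f| \le u\}$, so the claimed inequality reduces to showing that $M * V$ carries less energy than $V$ on this band. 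The triangle inequality provides the pointwise bound $|M * V|(f) \le (|M| * |V|)(f)$, reducing the problem to a statement about the non-negative measure $|M| * |V|$.

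At this point Lemma~\ref{lem:conv_concentration} delivers the key estimate: the hypothesis that $v$ is dominated by frequencies below $u$ is exactly the statement that the band $\{|f| \le u\}$ is a dominant interval (in the 2D generalisation, a dominant ball) for $|V|$, so convolving with the non-negative measure $|M|$ can only move mass outside this dominant region, giving $\int_{|f|\le u}(|M| * |V|) \le \int_{|f|\le u} |V|$ after normalising $|M|$ appropriately. Strict inequality follows as soon as $m$ is not identically constant, i.e.\ whenever the ReLU is active at some pixels but not others. The main obstacle I anticipate is matching the measure-theoretic, $L^1$-flavoured conclusion of Lemma~\ref{lem:conv_concentration} to whatever norm $\|\cdot\|$ denotes in the statement: if it is read as the $L^2$ norm via Parseval one has to apply the same dominance argument to the power spectrum $|V|^2$ rather than to $|V|$ itself, which should carry through because $(|M|*|V|)^2$ disperses mass away from the dominant region in the same way. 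A secondary subtlety is that $M$ can fail to be an $L^1$ function since the mask $m$ may jump, but the remark in the excerpt that Lemma~\ref{lem:conv_concentration} extends to unnormalised measures is tailor-made for this situation, and the convolution $|M| * |V|$ is still well-defined as a positive measure.
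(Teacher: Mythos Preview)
Your approach mirrors the paper's in overall structure---reduce to a frequency-domain convolution, then invoke Lemma~\ref{lem:conv_concentration}---but with a different and more direct decomposition. The paper does not factor $\relu(x)-\relu(y)$ as $m\cdot v$ with a single $[0,1]$-valued multiplier; instead it writes, via the fundamental theorem of calculus,
\[
\relu(X+V)-\relu(X)=\int_0^1 M(x+\lambda v)*V\,d\lambda,
\]
where each $M(\cdot)$ is the Fourier transform of the \emph{binary} mask $I[\,\cdot\,>0]$, applies Jensen's inequality to pull the norm inside, and bounds the integral by its value at the maximising $\lambda^\star$, arriving at a single convolution $Z=M(x+\lambda^\star v)*V$. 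Your pointwise factorisation short-circuits this mean-value-type step; in fact your $m$ is exactly the $\lambda$-average of the binary masks along the segment from $y$ to $x$, so the two decompositions are tightly related.

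The normalisation obstacle you flag is real, and it is precisely where the paper's argument diverges from yours. The paper does not try to normalise $|M|$; instead it closes the bound by invoking the $1$-Lipschitzness of $\relu$ directly: since $[M(x+\lambda^\star v)*]$ is the Jacobian of a $1$-Lipschitz map, its operator norm is at most $1$, giving $\|Z\|/\|V\|\le 1$, and this combined with the dominance conclusion of Lemma~\ref{lem:conv_concentration} (cast as a ratio between low-band and total mass) yields the result. You already possess the equivalent ingredient---$m\in[0,1]$ \emph{is} the $1$-Lipschitz property---so the missing move is to deploy it as an operator-norm bound on the convolution rather than only as a pointwise fact about the multiplier. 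The $L^1$/$L^2$ tension you raise is likewise present in the paper's proof; it is handled there by passing to elementwise magnitudes (norms being invariant under this) so that the Fourier vectors can be treated as unnormalised measures before applying Lemma~\ref{lem:conv_concentration}.
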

Now, we have established that the ReLU will act (under certain circumstances) as a contraction on the low frequency component $X_u$ of its input.
Using the fact that most other network components act on frequencies in a very simple way, as discussed, it is then straightforward to extend this result to the whole residual connection, assuming we apply spectral normalisation to the convolutions:
\begin{thm}
	Let $f = x + g(x)$ be a convolutional residual block (i.e $g$ is a series of convolutions, batch normalisation and ReLUs), and assume that $g$ is regularised to be contraction ($\lip(g) < 1$), and that the conditions of Lemma \ref{lem:relu_contraction} hold on the input to the ReLU (i.e the difference image $x - y$ is low-frequency dominant).    Then, the low-passed distances between the output are lower-bounded by the low-passed distances on the input, that is $L ||H_u(x) - H_u(y)|| \le ||H_u(f(x)) - H_u(f(y))||$ for some constant $L > 0$.
	\label{thm:low_pass_bilip}
\end{thm}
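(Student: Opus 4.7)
The plan is to split $f(x) - f(y)$ into the identity branch and the residual branch, apply the low-pass filter $H_u$, and invoke the reverse triangle inequality. Concretely, since $H_u$ is linear we have $H_u(f(x) - f(y)) = H_u(x - y) + H_u(g(x) - g(y))$, hence
\[
\|H_u(f(x) - f(y))\| \;\geq\; \|H_u(x-y)\| - \|H_u(g(x) - g(y))\|.
\]
It therefore suffices to exhibit some $\alpha < 1$ with $\|H_u(g(x) - g(y))\| \leq \alpha \|H_u(x-y)\|$, after which we may take $L = 1 - \alpha > 0$.

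To establish this low-frequency contraction through the residual branch, I would decompose $g = g_n \circ \cdots \circ g_1$ and propagate an inductive bound of the form $\|H_u(g_k \circ \cdots \circ g_1(x) - g_k \circ \cdots \circ g_1(y))\| \leq \alpha_k \|H_u(x-y)\|$ one layer at a time. For convolutional layers, the fact that convolution is diagonalised by the Fourier transform means $K$ commutes with $H_u$, so $H_u(K * v) = K * H_u(v)$, and Parseval gives $\|H_u(K * v)\| \leq \sup_{|\xi| \leq u} |\hat{K}(\xi)| \cdot \|H_u(v)\|$. For batch normalisation, the additive bias cancels in the difference and the per-channel scaling commutes with $H_u$, contributing a multiplicative factor $|\gamma|$. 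ReLUs are the only nonlinear operation, and this is exactly the step where Lemma \ref{lem:relu_contraction} enters: the stated low-frequency-dominance hypothesis at each ReLU input yields $\|H_u(\relu(v^x) - \relu(v^y))\| < \|H_u(v^x - v^y)\|$, so the ReLU acts as a strict contraction on the low-frequency part. Composing these layer-wise bounds produces an overall factor controlling $\|H_u(g(x) - g(y))\|$ in terms of $\|H_u(x - y)\|$.

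The main obstacle is producing a quantitative $\alpha < 1$: the linear-layer bounds are controlled by the spectral-normalisation assumption $\lip(g) < 1$, which uniformly bounds the product of the layerwise operator norms, but Lemma \ref{lem:relu_contraction} only supplies a strict inequality rather than an explicit contraction ratio. The cleanest resolution is to absorb all of the linear layers into a single spectral bound coming from $\lip(g) < 1$ and to use the fact that each ReLU is $1$-Lipschitz on low frequencies (and strictly contractive under the lemma's hypothesis) to conclude that the composition satisfies $\|H_u(g(x) - g(y))\| \leq \lip(g) \cdot \|H_u(x-y)\|$, so that $L = 1 - \lip(g) > 0$ works. A secondary point to verify, though handled by hypothesis in the theorem statement, is that the low-frequency-dominance condition propagates through the network so that Lemma \ref{lem:relu_contraction} is applicable at every intermediate ReLU.
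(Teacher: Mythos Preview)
Your proposal is correct and follows essentially the same route as the paper: split via linearity of $H_u$, apply the reverse triangle inequality, and argue that $g$ is a contraction on low-frequency distances by treating convolutions (which commute with $H_u$) via the spectral-norm bound and ReLUs via Lemma~\ref{lem:relu_contraction}, yielding $L = 1 - \lip(g)$. The paper's proof is in fact terser than yours---it simply asserts that each layer type reduces low-frequency distances and composes---so your layer-by-layer discussion and your explicit flagging of the strict-versus-quantitative contraction issue and the propagation caveat are, if anything, more careful than the original.
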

This result can be extended easily to the whole network by the multiplicativity of (upper and lower) Lipschitz constants - if $\lip(f_1) = L_1$ and $\lip(f_2) = L_2$, then $\lip(f_1 \circ f_2) = L_1 L_2$.

The importance of this result is, as explained in the previous section, decimation only acts on the high frequency components of a signal.
To be specific, say that the highest frequency representable in a given signal is $u_{\textrm{max}}$. After decimation by a factor $D$, any signals above $u_{\textrm{max}} / D$ will be lost (if we filter before downsampling to avoid aliasing) and those below will be preserved.
Therefore, if one of the residual blocks is a strided convolution with stride $D$, essentially we can consider it as a normal residual block combined with the low-pass projection operation $H_{u_{\textrm{max}} / D}$.
Therefore, if Theorem \ref{thm:low_pass_bilip} holds on a pair of inputs $x, y$, then this lower bound holds equally well \emph{after decimation}, since $H_{u_{\textrm{max}} / D}(f(x))$ is unaffected by downsampling.

The relevant `low-frequency interval' for a given network, then,  is defined by the architecture - a ResNet containing 2 layers with stride 2, (and an arbitrary number of stride 1 layers) for example, has a total downsampling factor of 4 - that is, the final feature map can only resolve frequencies of $u_{\textrm{max}} / 4$ by Nyquist-Shannon.
Though the results above can apply to any interval which satisfies the assumptions, this defines the `low-frequency interval' of interest in our argument; in other words, we choose the cutoff frequency $u = u_{\textrm{max}} / D$ such that the operator $H_u$
is the identity on the final feature map, so we are interested in whether frequency content which can be resolved by the sampling rate of the final feature map is preserved.

It is important to mention some limitations, however.
Firstly, the theorem only establishes a bound if the \emph{difference} between images is low-band dominant.
If $x-y$ is dominated by high frequencies, then our theorem does not apply.
Secondly, the extension to the full network assumes that, not only is $x- y$ low band dominant, but $f_l(x) - f_l(y)$ is low-band dominant for all intermediate feature maps $l$ before downsampling layers.
We have not proved that this is necessarily the case even assuming it holds for the input.
In fact, we do not think it is possible to prove it without making further assumptions or imposing additional constraints on the network architecture based on our empirical results, discussed below.

Having said that, there are reasons to suppose that the result of Theorem $\ref{thm:low_pass_bilip}$ may hold even if our assumptions are mildly violated.
This is because the assumption of $x- y$ being dominated by its low-frequency component in the sense of lemma \ref{lem:conv_concentration} is a fairly strong assumption.
It is reasonable to suspect it may be a fairly \emph{weak} sufficient condition; it does not make any assumptions about the structure of the ReLU itself, other than that acts as convolution in the frequency domain.
This avoids dealing with the ReLU's input dependence, but is fairly crude, and so we might reasonably expect our main \emph{conclusion}, that residual networks will often approximately preserve distances in the low-frequency components of their input, to hold more broadly than our sufficient conditions.
This can only really be answered by an empirical study of standard neural networks trained on image datasets, and we investigate this empirically below.
\section{Finding Counter-Examples}
\label{section:counter_examples}
The proof in the previous section shows that the distances between two inputs $x$ and $y$ will be approximately preserved by a neural mapping, assuming that our assumptions hold.
These conditions seem reasonable, but we may be able to find counter examples.

We present a simple procedure to find such examples in Algorithm \ref{alg:find_counter_examples}.
\begin{algorithm}
	\caption{Given image $x_0$, find an image $x$ such that $f(x_0) \simeq f(x)$ and $||x_0 - x|| > 0$.}
	\label{alg:find_counter_examples}
	\begin{algorithmic}[0]
		\State \textbf{Input:} Input $x_0$, function $f$, step size $\eta$, number of steps $n$, distribution over unit vectors $D$
		\State $y_0 \gets f(x_0)$
		\For{ $i \in 0..n-1$ }
		\State $v \gets D()$ \Comment{Sample random step direction}
		\State $g \gets \nabla_x \left( ||f(x) - y_0|| \right) $  \Comment{find the gradient of the distance from the target output}
		\State $u \gets v - \frac{\langle v, g \rangle}{\langle g, g \rangle} g$ \Comment{Project the step to be orthogonal to the gradient}
		\State $x \gets x + \eta u$ \Comment{Take a step along the level set}
		\EndFor \\
		\Return $x$
	\end{algorithmic}
\end{algorithm}
The intuition behind this is identical to the constructive argument in Section \ref{section:no_bilip}.
The main problem with directly adapting that construction into a practical algorithm is that it requires computing the local Jacobian and its SVD, which in large feature spaces would be a severe bottleneck.
As an alternative, we consider taking steps which hold the norm of $||f(x) - y_0||$ constant, rather than the function $f(x)$ directly, as this only requires computing a vector gradient, exploiting the fact that the level sets of a function are locally orthogonal to the gradient, to achieve a similar aim of choosing steps that keep $f(x)$ approximately constant.
In practice, we need to add an initial small perturbation to $x$ because if $||f(x) - y_0||$ is zero then its gradient is also zero and the projection becomes unstable numerically.

We can influence the properties of the generated example $x$ by changing the distribution of $v$.
It is simple to, for example, choose $v$ to be band limited, to generate examples where $x_0 - x$ is either low-frequency or high-frequency dominant.
We investigate both of these below.
In practice, this algorithm does not totally prevent the distance from increasing due to the accumulation of errors from the finite step size.
There is an obvious trade off between the size of the perturbation generated, the tolerated error in keeping the output constant, and the number of iterations this algorithm is run for.

\section{Experiments}
\subsection{Verifying our theoretical analysis}
There are two key claims in Section \ref{section:frequency_maths} we want to verify empirically.
These are 1) Do the conditions of Lemma \ref{lem:conv_concentration}  hold (i.e are the images and feature maps low frequency dominant in the sense of this lemma)? And 2) do the \emph{conclusions} of Theorem \ref{thm:low_pass_bilip} hold, that is, is $||H_u(g(x)) - H_u(g(y))|| < ||H_u(x) - H_u(y)||$ for the inputs to each residual block?

Note that the first is a sufficient condition for Theorem \ref{thm:low_pass_bilip} (and thus the second point) to hold, but it is not a necessary one, and this may hold empirically on tested inputs even if domination does not.
In addition, we are interested in seeing whether we can find violations of our theorem when we relax the conditions of the proof.
In particular, in the proof we assume that the convolutions are strict contractions ($\lip(g) < 1$), whereas in previous work on distance aware learning, typically spectral norm with a coefficient between 3 and 6 has been used \citep{van2020uncertainty,liu2020simple,mukhoti2021deterministic}.
Our theorem applies to any \emph{possible} setting of the weights of a network - the \emph{specific} trained weights of a model may obey our conditions, even if the regularisation does not strictly enforce it, in which case our frequency-based explanation could still illuminate why these models do not appear to suffer from feature collapse.

In order to check these conditions, we train Wide ResNets \citep{zagoruyko2016wide} (with minor changes discussed in Appendix \ref{appendix:architectural_details}) on MNIST \citep{lecun1998gradient}, FashionMNIST \citep{xiao2017fashion} and CIFAR10 \citep{krizhevsky2009learning}.
We can check whether both of these conditions hold empirically, on data from these datasets and the intermediate feature maps of the residual network.
We can check the domination of an input image by direct numerical check on its Fourier transform.
In order to check the second condition, we consider a mini-batch of data $\{x_i \mid i \in 1..m\}$.
We can check empirically that the residual connection is as contraction on the low-frequency component by considering pair-wise distances before and after the residual connection, and calculating the proportion of the batch for which $||g(H_u(x_i)) - g(H_u(x_j))|| < ||H_u(x_i) - H_u(x_j)||$.
This test does not necessarily prove that the theorem holds for \emph{all} possible inputs, but if the theorem is violated widely we might expect to see violations on a particular dataset.
More importantly, if we cannot find violations on natural datasets, this does suggest that low-frequency distance is preserved \emph{between natural images}.
These results are shown for FashionMNIST in table \ref{table:theorem_checker_fmnist_only}.
We also carried out the same experiment on MNIST and CIFAR10, but we defer these results to Appendix \ref{appendix:more_exps} for brevity, as the results are consistent across these datasets.

As this table shows, Theorem \ref{thm:low_pass_bilip} is supported remarkably well by the evidence - even
though we cannot prove that it holds on all possible inputs, we are unable to find \emph{any} counterexamples in the image datasets tested.
We do find that the low-frequency domination condition holds on a reasonable proportion of images, but not enough to fully explain the adherence to the Theorem \ref{thm:low_pass_bilip}, though as mentioned, there are theoretical reasons to expect this sufficient condition to be loose.

In addition, our theorems only apply when the value of the Lipschitz constant is below 1.
As we might expect, when we increase the value of the spectral normalisation above 1, relaxing this constraint, we start to see violations of our condition in trained models.
However, this is fairly rare in practice, as Table \ref{table:theorem_checker_fmnist_only} shows.
Interestingly, we observe that models appear to converge to satisfy our condition during training; an example of this dynamic is shown in Figure \ref{fig:theorem_check_training_small}.
Models with residual Lipschitz constant less than 1 obey our theorem throughout training, while those with larger Lipschitz constant violate it early it training, but generally converge to mostly obey it on the training data.
We do not have a theoretical explanation for why this should happen as a result of training, but it is agreement with the empirical observation in prior work that  distance aware learning with spectral normalisation coefficients above 1 have performed well, despite their lack of guarantees \citep{van2020uncertainty,liu2020simple,van2021improving,mukhoti2021deterministic}.
\begin{figure}[t]
	\centering
	\label{fig:theorem_check_training_small}
	\includegraphics[width=\textwidth]{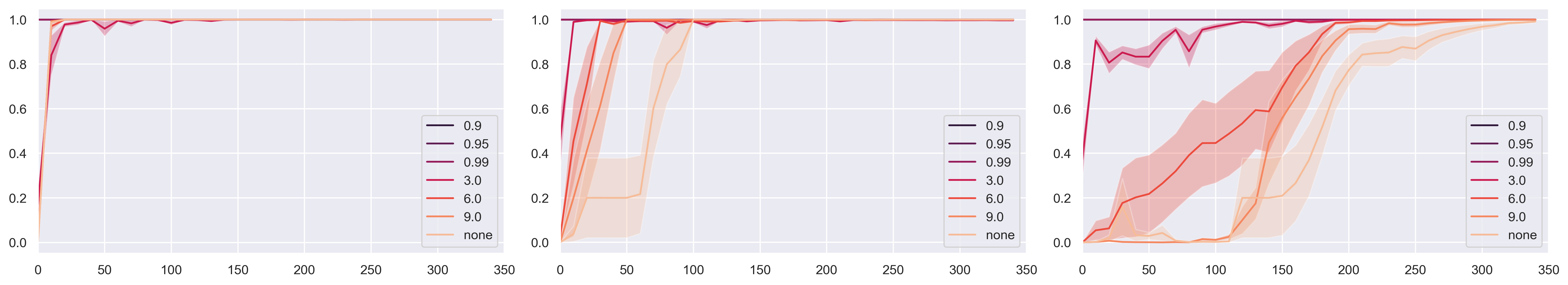}
	\caption{
		Evolution of the mean value of the theorem check during training for the first three blocks of a WideResNet trained on CIFAR10, for various values of the spectral normalisation coefficient ($\lip(g)$).
		The behaviour of other blocks is similar - these are given in Figure \ref{fig:theorem_check_training_full} (Appendix \ref{appendix:more_exps}).
		We plot the mean and standard error across 5 seeds.
	}
\end{figure}
\begin{table}[t]
	\centering
	\caption{Results of empirically checking the conditions of Lemma \ref{lem:conv_concentration} (low frequency
		domination) and Theorem \ref{thm:low_pass_bilip} (low frequency contraction).
		Numbers are the proportion of the dataset for which we found the condition to hold exactly; so 1 means no violations were found, 0.5 means it was true for half the inputs tested, etc.
		We report means and standard error over 25 seeds and use a WideResNet with depth 10 and widen factor of 1 for these datasets.
	}
	\label{table:theorem_checker_fmnist_only}
	\resizebox{0.9\columnwidth}{!}{%
		\begin{tabular}{@{}lllccccc@{}}
			\toprule
			                                                  & Dataset                 & $\lip(g)$        & $x$              & $f_1(x)$         & $f_2(x)$         & $f_3(x)$         & $f_4(x)$         \\ \midrule
			\multirow{7}{*}{Lemma \ref{lem:conv_concentration}}
			                                                  & \multirow{7}{*}{FMNIST}
			                                                  & 0.9                     & $0.86  \pm 0.00$ & $0.18  \pm 0.04$ & $0.24  \pm 0.04$ & $0.48  \pm 0.04$ & $0.47  \pm 0.03$                    \\
			                                                  &                         & 0.95             & $0.86  \pm 0.00$ & $0.18  \pm 0.04$ & $0.25  \pm 0.02$ & $0.48  \pm 0.03$ & $0.46  \pm 0.01$ \\
			                                                  &                         & 0.99             & $0.86  \pm 0.00$ & $0.18  \pm 0.05$ & $0.25  \pm 0.04$ & $0.51  \pm 0.02$ & $0.48  \pm 0.01$ \\
			                                                  &                         & 3.0              & $0.86  \pm 0.00$ & $0.24  \pm 0.03$ & $0.28  \pm 0.04$ & $0.53  \pm 0.04$ & $0.47  \pm 0.02$ \\
			                                                  &                         & 6.0              & $0.86  \pm 0.00$ & $0.24  \pm 0.03$ & $0.28  \pm 0.02$ & $0.53  \pm 0.02$ & $0.47  \pm 0.02$ \\
			                                                  &                         & 9.0              & $0.86  \pm 0.00$ & $0.25  \pm 0.01$ & $0.28  \pm 0.04$ & $0.54  \pm 0.04$ & $0.47  \pm 0.01$ \\
			                                                  &                         & no               & $0.86  \pm 0.00$ & $0.24  \pm 0.03$ & $0.25  \pm 0.03$ & $0.53  \pm 0.03$ & $0.47  \pm 0.01$ \\
			\midrule
			\multirow{7}{*}{Theorem \ref{thm:low_pass_bilip}} & \multirow{7}{*}{FMNIST} & 0.9              & n/a              & $1.00 \pm 0.00$  & $1.00 \pm 0.00$  & $1.00 \pm 0.00$  & $1.00 \pm 0.00$  \\
			                                                  &                         & 0.95             & n/a              & $1.00 \pm 0.00$  & $1.00 \pm 0.00$  & $1.00 \pm 0.00$  & $1.00 \pm 0.00$  \\
			                                                  &                         & 0.99             & n/a              & $1.00 \pm 0.00$  & $1.00 \pm 0.00$  & $1.00 \pm 0.00$  & $1.00 \pm 0.00$  \\
			                                                  &                         & 3.0              & n/a              & $0.98 \pm 0.04$  & $1.00 \pm 0.00$  & $1.00 \pm 0.00$  & $1.00 \pm 0.00$  \\
			                                                  &                         & 6.0              & n/a              & $0.95 \pm 0.11$  & $0.92 \pm 0.04$  & $1.00 \pm 0.00$  & $1.00 \pm 0.00$  \\
			                                                  &                         & 9.0              & n/a              & $1.00 \pm 0.01$  & $0.85 \pm 0.12$  & $1.00 \pm 0.00$  & $1.00 \pm 0.00$  \\
			                                                  &                         & no               & n/a              & $1.00 \pm 0.00$  & $0.86 \pm 0.07$  & $1.00 \pm 0.00$  & $1.00 \pm 0.00$  \\ \bottomrule
		\end{tabular}
	}
\end{table}
\subsection{Artificially finding counter-examples}
\begin{figure}
	\centering
	\label{fig:adv_trajectory}
	\includegraphics[width=\textwidth]{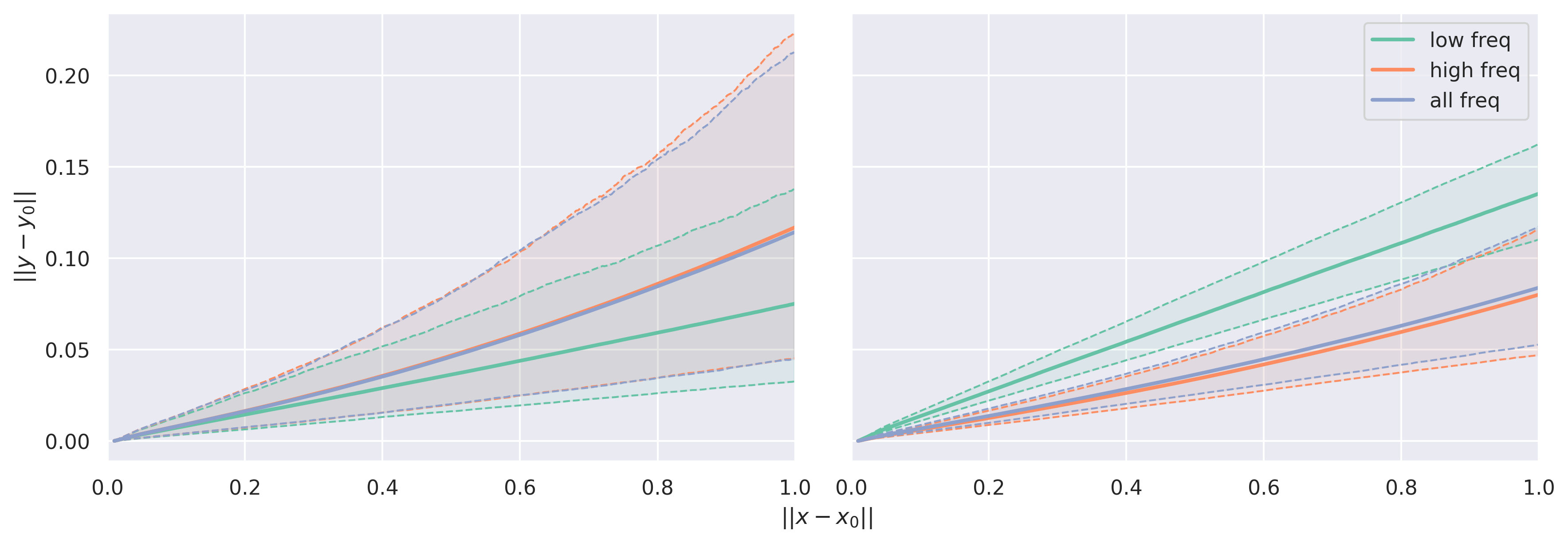}
	\caption{Distance in feature space ($||y - y_0||$) against distance in image space ($||x - x_0||$) during a run of Algorithm \ref{alg:find_counter_examples}. Left; for a standard model, right; for a model where we low-pass-filter the feature maps between residual blocks, enforcing the domination assumption.
		We show results where the trajectory is restricted to either low band or high band perturbation and unrestricted.
		On each, we show the mean and 10th and 90th quantile across 1000 images.
		We detail a more quantitative version of this experiment in appendix \ref{appendix:more_exps}.
	}
\end{figure}
The previous section showed that our results are descriptive of network behaviour on natural images.
Here, we describe testing our assumptions using Algorithm \ref{alg:find_counter_examples} to find counter-examples.
While the results of the previous section suggest that examples of feature collapse are rare on natural image datasets, we can find them by explicit optimisation, as shown in Figure \ref{fig:adv_trajectory}.

Note that this does \emph{not} disprove Theorem \ref{thm:low_pass_bilip} - an important assumption of our theorem is that not just $x - y$ is low-band dominant, but $f_l(x) - f_l(y)$ for all intermediate layers $l$.
As mentioned above, we have no proof that $f_l(x) - f_l(y)$ has to be low-band dominant if the input is (and indeed, this result shows no such proof could be found).
While the results of the previous section suggest that this is a justified assumption on random input, our results here show that it doesn't hold universally.

To verify that this is a result of our \emph{assumption} being violated, and not the content of our theorem itself, we experiment with \emph{enforcing} that the feature maps are low-band dominant by inserting low-pass filters after every residual block.
In these models, as expected, we find that we are unable to find low-frequency perturbations which do not significantly increase the feature space distance, unlike for unconstrained or high frequency perturbations.
Adding these filters on the feature maps, however, has a large cost in terms of accuracy.
We leave whether we can enforce the conditions of our theorem in a less restrictive way to future work.
\section{Conclusion and Limitations}
\label{section:conclusion}
We have demonstrated that, under mild assumptions, residual networks will be approximately distance preserving on the low-passed portion of their input.
These conditions and results appear to hold in practice, and on natural datasets to hold even when the assumptions of our mathematical proof are relaxed, for instance, by training with a spectral norm coefficient larger than one.
We think that this is a useful framework to explain the good empirical performance of this scheme in practice.
We have restricted ourselves to the study of \emph{existing} residual architecture and normalisation schemes. We hope that future work can build on our analysis to develop models which enforce the conditions outlined here more explicitly.
We hope that our Algorithm \ref{alg:find_counter_examples} will be a useful tool for the designers of such methods.
We have focused on distance-aware learning in order to facilitate better uncertainty quantification in deep networks.
Our main results are theoretical, and so our work has no obvious direct negative societal impact.
Such models have the potential to avoid errors in automated decision making by referring decisions to an expert in the face of uncertainty.

\section*{Acknowledgements}
The authors would like to thank the members of OATML, OxCSML and anonymous reviewers for their feedback during the project. In particular, we would like to thank Milad Alizadeh, Clare Lyle, Andrew Jesson and Jannik Kossen for helpful discussions. LS/JvA are grateful for funding by the EPSRC (grant reference EP/L015897/1 and EP/N509711/1 respectively). JvA is also grateful for funding by Google-DeepMind.

\bibliographystyle{plainnat}
\bibliography{paper}

\newpage

\appendix

\section{Background: ResNets and invertiblity.}

Here, we restate some results from the literature on invertible models which are relevant to our analysis.
In particular, the following two results are relevant;

\begin{thm}[\citet{behrmann2019invertible}]
	\label{thm:invertibleresnet}
	Let $f_\theta = f_{\theta_1} \circ f_{\theta_2} \circ \dots f_{\theta_n}$ be a ResNet, where each $f_{\theta_i}$ is a residual block $f_{\theta_i}(x) = x + g_{\theta_i}(x)$.
	Then a sufficient condition for $f$ to be invertible is for each $g$ to be a contraction, that is $\lip(g_{\theta_i}) < 1 \forall i$.
\end{thm}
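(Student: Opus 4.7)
The plan is to reduce the theorem to showing that a single residual block $f(x) = x + g(x)$ with $\lip(g) < 1$ is a bijection of $\R^n$; the full claim then follows since the composition of bijections is a bijection.

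For the single-block reduction, the natural tool is the Banach fixed point theorem. Given any target output $y \in \R^n$, inverting $f$ amounts to solving $x + g(x) = y$, equivalently finding a fixed point of the auxiliary map $T_y(x) = y - g(x)$. A direct computation gives
\begin{equation*}
\|T_y(x_1) - T_y(x_2)\| = \|g(x_2) - g(x_1)\| \le \lip(g)\,\|x_1 - x_2\|,
\end{equation*}
so $T_y$ is a strict contraction on the complete metric space $\R^n$. Banach's theorem then delivers a unique fixed point $x^* \in \R^n$, which by construction satisfies $f(x^*) = y$. Varying $y$ yields surjectivity, and uniqueness of the fixed point yields injectivity, so $f$ is a bijection. (As a free by-product, a short additional calculation bounds the Lipschitz constant of $f^{-1}$ by $(1 - \lip(g))^{-1}$, giving continuous invertibility.)

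To extend to the full network, I would argue by induction on depth: the composition of bijections is a bijection, with inverse $f_\theta^{-1} = f_{\theta_n}^{-1} \circ \cdots \circ f_{\theta_1}^{-1}$. In practice one can also interpret this constructively, since for each block the Banach proof provides a convergent iteration $x_{k+1} = T_y(x_k)$ for numerically recovering the inverse.

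The main subtlety, and essentially the only thing worth checking carefully, is that the domain and codomain of each block coincide (both equal to $\R^n$ under the theorem's setup), so that ``invertible'' refers consistently to the same space on both sides of each composition; this is exactly the dimension-preservation assumption implicit in a standard residual block. The Banach step itself is immediate, needs no smoothness or compactness, and relies only on $\lip(g) < 1$ being a \emph{strict} inequality, so there are no hidden obstacles.
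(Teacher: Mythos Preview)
Your proof is correct and follows the standard Banach fixed-point argument. Note, however, that the paper does not actually prove this theorem: it is stated in the appendix as a background result restated from \citet{behrmann2019invertible}, with no proof given. Your approach via the contraction $T_y(x) = y - g(x)$ is precisely the argument used in the original reference, so there is nothing to compare against within this paper itself.
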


and the following result about the Lipschitz constant

\begin{thm}[\citet{behrmann2019invertible}]
	\label{thm:resblocklipschitz}
	Let $f = I + g$ be a residual block, and let $\lip(g) = L < 1$. Let $||x||$ denote the Euclidean norm of the vector $x$. $f$ is bi-Lipschitz, with
	\begin{align}
		\frac{1}{1 + L} || x - y|| \le ||f(x) - f(y)|| \le (1 + L) ||x - y||
	\end{align}
\end{thm}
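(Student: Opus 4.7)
The plan is to apply the forward and reverse triangle inequalities to the decomposition $f(x) - f(y) = (x - y) + (g(x) - g(y))$, together with the hypothesis $\lip(g) = L$, and extract both sides of the bi-Lipschitz bound with no further analytic machinery.

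For the upper bound I would write
$$||f(x) - f(y)|| = ||(x-y) + (g(x)-g(y))|| \le ||x-y|| + ||g(x)-g(y)|| \le (1+L)||x-y||,$$
using the forward triangle inequality followed by the Lipschitz bound on $g$. This direction is immediate and presents no real obstacle.

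For the lower bound, the natural move is the reverse triangle inequality,
$$||f(x) - f(y)|| \ge ||x-y|| - ||g(x)-g(y)|| \ge (1-L)||x-y||,$$
and since $L < 1$ this already yields a strictly positive lower Lipschitz constant, so $f$ is bi-Lipschitz. This is where I expect the real friction: the theorem as stated gives the lower constant as $\frac{1}{1+L}$, but one can check that $1 - L < \frac{1}{1+L}$ whenever $0 < L < 1$ (since $(1-L)(1+L) = 1-L^2 < 1$), and the one-dimensional example $g(x) = -Lx$ produces $f(x) = (1-L)x$, realising the factor $(1-L)$ exactly and refuting the stronger bound. So the constant $\frac{1}{1+L}$ on the left is not derivable from $\lip(g) = L$ alone; I would therefore present the proof above with the lower bound $(1-L)||x-y||$, which is the form consistent both with the reverse triangle inequality and with viewing $f^{-1}$ as $\tfrac{1}{1-L}$-Lipschitz (as in the Banach fixed-point / Neumann series analysis underlying Theorem \ref{thm:invertibleresnet}). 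If the stronger constant is genuinely intended, the main obstacle would be locating what additional structure on $g$ the original source imposes, since no such bound follows from the Lipschitz hypothesis by itself.
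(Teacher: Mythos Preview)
Your argument is correct and matches the paper's approach. The paper does not give a standalone proof of this background theorem (it is cited from \citet{behrmann2019invertible}), but in its proof of Theorem~\ref{thm:low_pass_bilip} it runs exactly the reverse triangle inequality computation you describe, obtaining the lower constant $(1-L)$ rather than $\tfrac{1}{1+L}$.

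Your observation about the constant is also correct and worth stating plainly: the lower bound $\tfrac{1}{1+L}$ as printed cannot hold under the hypothesis $\lip(g)=L$ alone, since $g(x)=-Lx$ gives $f(x)=(1-L)x$ and $(1-L)<\tfrac{1}{1+L}$ for $0<L<1$. The tight lower Lipschitz constant is $1-L$, and the paper's own downstream argument (the last display in the proof of Theorem~\ref{thm:low_pass_bilip}) uses precisely this. So you have both identified the right proof and caught a misstatement in the quoted constant; no additional structure on $g$ would rescue $\tfrac{1}{1+L}$, and none is assumed elsewhere in the paper.
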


These results are important because they establish sufficient conditions for a network mapping to be bi-Lipschitz.
This motivated the regularisation scheme used in practice by existing work on distance aware learning, namely enforcing a Lipschitz constant on the residual connection $g$ using spectral normalisation.
However, it is important to note that these results assume that $f$ is a mapping from $\R^n$ to itself.
As we shall explain in the Section \ref{section:no_bilip}, it is straightforward to show that they \emph{cannot} apply, in general, if this is not the case.

These results are also an important structural motivation for our argument - broadly speaking, the argument in section \ref{section:frequency_maths} tries to establish a sense in which the residual connection can be a contraction if we consider it is restriction to the low frequency components of the input.

\section{Background: The-Nyquist Shannon Theorem and Aliasing}
\label{section:nyquistshannon}

In this section, we give a more detailed discussion of the Nyquist-Shannon theorem and aliasing, and also treat some of the subtleties of this result in more detail than was possible in the main body.

The classical application of the Nyquist-Shannon theorem is the digitalisation of a continuous signal; if a signal sampled at a frequency $u_s$ contains no frequency content above the Nyquist frequency $u_s / 2$ then it can be reconstructed exactly.
However, the result applies equally to re-sampling at a lower frequency.

It is worth mentioning that the Nyquist-Shannon theorem is a \emph{sufficient} condition for lossless reproduction, not a necessary one; with some additional assumptions it is possible to resolve an `undersampled' signal.
The most notable application of this is in compressed sensing, where a signal can sometimes be recovered exactly if it is known that it is sparse in the spectral domain.

\begin{figure}
	\centering
	\includegraphics[width=\textwidth]{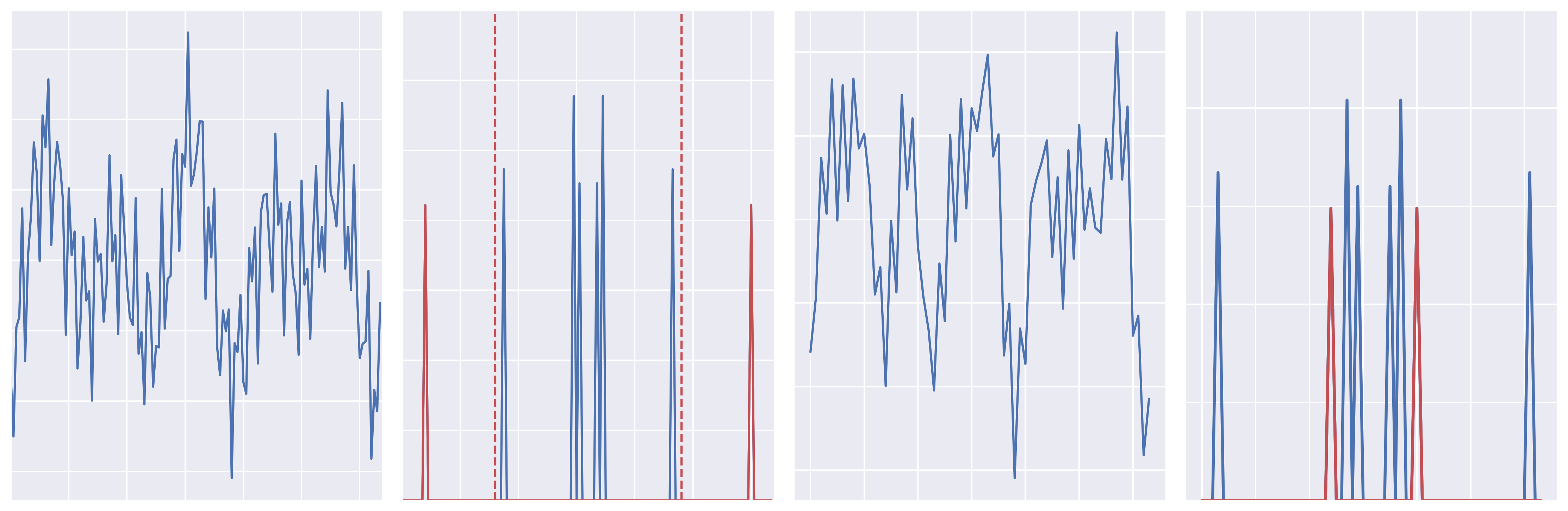}
	\includegraphics[width=\textwidth]{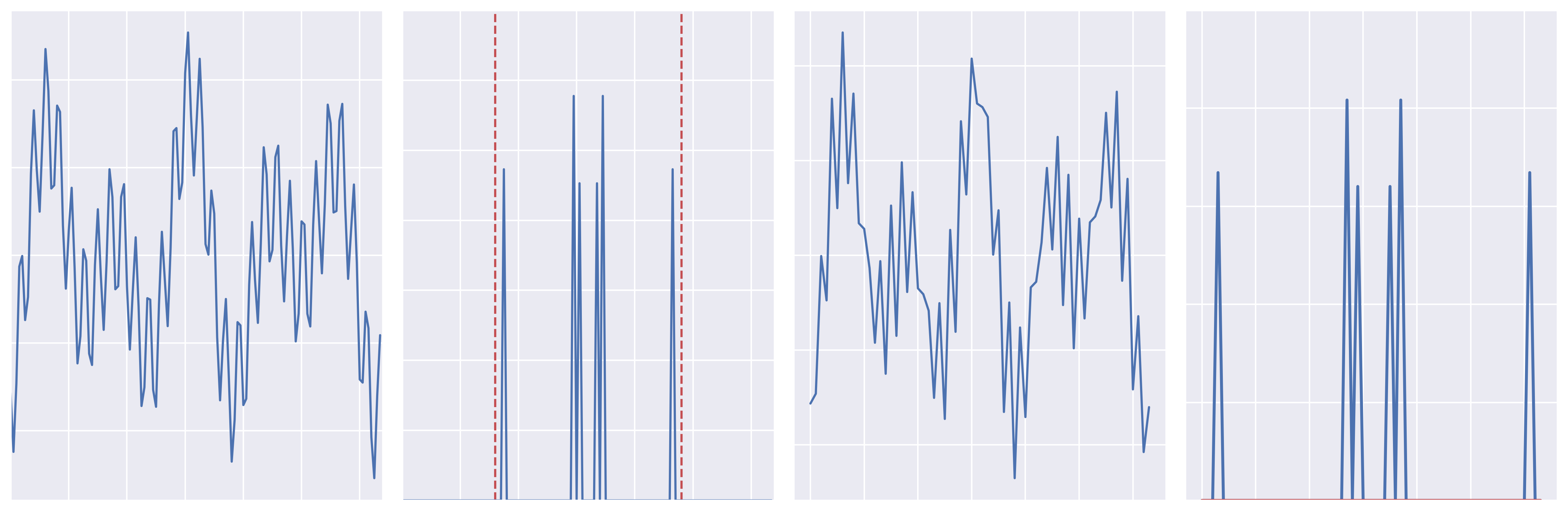}
	\caption{
		Figure illustrating aliasing in a 1d signal after decimation by a factor of 2.
		From left to right; a) the original signal. b) the Fourier spectrum of the original signal, with the Nyquist frequency after downsampling marked as a red dotted line and content above the Nyquist marked in red. Frequency content above this limit will be aliased after downsampling.
		c). the decimated signal. d) the Fourier spectrum of the decimated signal. The peak due to the aliasing of the high frequency component of the original signal is marked in red.
		The second row shows the same, but with a low-pass filter applied to the original signal to remove content above the Nyquist limit before downsampling. The frequency content above the Nyquist is simply lost, but the sub-Nyquist content passes through undistorted.}
	\label{fig:alias_fold}
\end{figure}

In the main body, we refer to aliasing and to low-passing the image to prevent this phenomenon. An illustration of the effect on aliasing on the frequency spectrum of a (1d) signal is shown in Figure \ref{fig:alias_fold}.
An anti-aliasing filter is one like that in the second row of this figure; we remove all frequency above the Nyquist rate to reduce any aliasing effects.
Of course, this still causes feature collapse (indistinguishable signals), but the resultant feature collapse is far easier to reason about, as well as being better behaved; uncontrolled aliasing introduces non-local artefacts into a signal, which in fact hurts the translation invariance of convolutional classifiers \citep{zhang2019making}.
A simple dimensionality argument shows that the volume of potential signals made indistinguishable by these two operations is the same. Say we have a linear mapping $D$ from a signal in $\R^n$ to one in $\R^{n/2}$.
The rank-nullity theorem implies that each signal $y \in \R^{n/2}$ has a space of dimension $\R^{n/2}$ of solutions $x$ to $y = D x$.
This applies whether $D$ is naive decimation or anti-alias filtering + naive decimation, as both of these are linear operations.

In our paper, we often assume `ideal' low pass filtering, that is, a filter which lets all content below a given frequency pass through unchanged, while setting all high frequency content to zero.
This is possible for discrete signals (by implementing this as a direct binary mask in the Fourier domain), but is rarely done.
For continuous signals this ideal frequency response cannot be realised.
In practical image processing, this brick wall filter often causes undesirable `ringing' artefacts in the final image.
In addition, implementing it for discrete signals requires a Fourier transform, rather than using the direct convolution; although the Fourier transform implementation actually has lower asymptotic complexity than a direct convolution, in practice the direct operation is highly optimised on modern GPUs and can often be faster.
\citet{zhang2019making}'s method BlurPool uses a direct convolution kernel to implement this anti-aliasing filter step before all downsampling operations.

\begin{figure}
	\centering
	\includegraphics[width=0.8\textwidth]{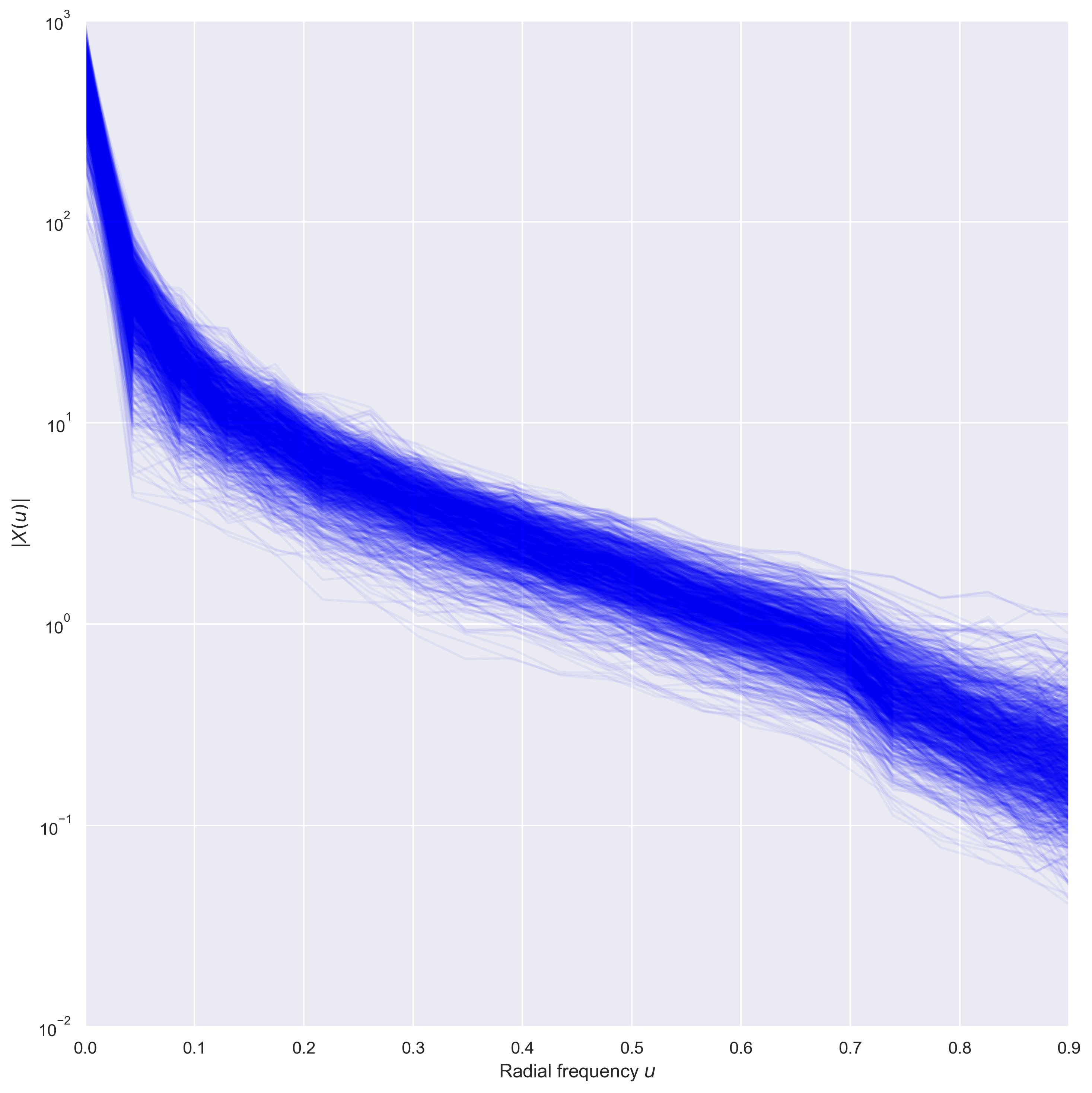}
	\caption{
		The power in each frequency bin for a random sample of images from the CIFAR10 test set.
		In a 2d Fourier transform, the frequency is a 2d vector $f = (f_x, f_y)$. Here, we treat all frequencies with the same radial frequency $\sqrt{f_x^2 + f_y^2}$ as equivalent (i.e we only care about the frequency, not the direction).
		This is an empirical demonstration of our assertion that typical images have most of their power concentrated in the low frequencies.
	}
	\label{fig:freq_spectrum_cifar}
\end{figure}

In the text, we claim that the power of typical images is concentrated in the low frequencies.
In Figure \ref{fig:freq_spectrum_cifar} we show the spectra of a random sample of images, showing that indeed most have their power concentrated in the low frequencies, though we do not find that they are universally low frequency dominant in the sense of Lemma \ref{lem:conv_concentration}, as shown in Tables \ref{table:theorem_checker} and \ref{tab:check_theorems_cifar10}.
However, this does motivate why this assumption of `low frequency dominance' is not unreasonable for natural images.
In addition, this does suggest that even though the low frequency interval may not \emph{strictly} be dominant in the sense we use to prove Lemma \ref{lem:conv_concentration}, as we point out in section \ref{section:frequency_maths} this is likely to be a weak sufficient condition, and if images power is broadly concentrated in the low frequencies it seems reasonable to expect that the ReLU (since it is a convolution in the frequency domain) will still reduce the distances in the low frequency band, which could be an explanation for why we see low-frequency contraction holding far more often than our sufficient condition.

\section{Proofs omitted from the main text}
\label{appendix:proofs}
\subsection{Proofs for section \ref{section:no_bilip}}
\begin{proof}[Proof of theorem \ref{thm:bilip}]

	First, we prove the following lemma.

	\begin{lem}
		Let $f : X \mapsto Y$ be a bi-Lipschitz and surjective function, so $\frac{1}{L}||x_1 - x_2||_X \le ||f(x_1) - f(x_2)||_Y \le L || x_1 - x_2||_X$, and $Y$ is the image of $X$ under $f$. Then $f$ is a homeomorphism between $X$ and $Y$, and so $X$ and $Y$ are homeomorphic.
		\label{lem:bilip_homeomorphism}
	\end{lem}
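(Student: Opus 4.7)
The plan is to unpack the definition of a homeomorphism (a continuous bijection with continuous inverse between topological spaces) and verify each of the three requirements directly from the two-sided Lipschitz bound. Since $Y$ is defined as the image of $X$ under $f$, surjectivity is given, so the work reduces to (i) injectivity, (ii) continuity of $f$, and (iii) continuity of $f^{-1}$, with the norms inducing the standard metric topologies on $X$ and $Y$.

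First I would use the lower bound. Suppose $f(x_1) = f(x_2)$; then $\|f(x_1) - f(x_2)\|_Y = 0$, and the inequality $\frac{1}{L}\|x_1 - x_2\|_X \le \|f(x_1) - f(x_2)\|_Y$ forces $\|x_1 - x_2\|_X = 0$, hence $x_1 = x_2$. Combined with surjectivity, this shows $f$ is a bijection, so a well-defined inverse $f^{-1}: Y \to X$ exists.

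Next I would read off continuity of both $f$ and $f^{-1}$ from the two Lipschitz bounds. The upper bound $\|f(x_1) - f(x_2)\|_Y \le L \|x_1 - x_2\|_X$ states that $f$ is Lipschitz with constant $L$, which is strictly stronger than continuity (given $\varepsilon > 0$, take $\delta = \varepsilon / L$). For the inverse, given any $y_1, y_2 \in Y$ let $x_i = f^{-1}(y_i)$; the lower bound rearranges to $\|f^{-1}(y_1) - f^{-1}(y_2)\|_X = \|x_1 - x_2\|_X \le L \|f(x_1) - f(x_2)\|_Y = L \|y_1 - y_2\|_Y$, so $f^{-1}$ is also Lipschitz (with the same constant $L$), and in particular continuous.

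Having established that $f$ is a continuous bijection with continuous inverse, the conclusion that $f$ is a homeomorphism, and hence that $X$ and $Y$ are homeomorphic, is immediate from the definition. I do not expect any real obstacle here: this lemma is essentially a one-line consequence of the definitions once one notices that the Lipschitz bounds are simply the continuity moduli for $f$ and $f^{-1}$; the only subtle point worth flagging explicitly is that, because $Y$ is defined as $f(X)$ rather than as an ambient space, surjectivity is automatic and one does not have to worry about whether the inverse is defined on a larger set than intended.
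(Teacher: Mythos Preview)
Your proposal is correct and mirrors the paper's own proof almost exactly: both verify bijectivity, continuity of $f$, and continuity of $f^{-1}$ directly from the two-sided Lipschitz inequality, with the lower bound giving injectivity and Lipschitz continuity of the inverse, and the upper bound giving Lipschitz continuity of $f$. The only cosmetic difference is that you phrase injectivity as the contrapositive, which is of course equivalent.
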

	\begin{proof}
		Recall that a function $f$ is a homeomorphism if $f$ is bijective, $f$ is continuous, and $f^{-1}$ is also continuous.
		We will address these in turn.
		To see that $f$ is bijective, note that $f$ is injective iff. $\forall x_1, x_2 \in X$, we have $x_1 \ne x_2 \implies f(x_1) \ne f(x_2)$. But this follows directly from the lower Lipschitz property of $f$, since if $x_1 \ne x_2$, then $||x_1 - x_2||_X > 0$, so $|| f(x_1) - f(x_2)||_Y > 0$, from which it follows that $f(x_1) \ne f(x_2)$. Since $f$ is injective (one-to-one) and surjective, it is a bijection.
		Since any function which is Lipschitz continuous is also continuous, the fact that $f$ is continuous is given.
		Since $f$ is bijective, the inverse function $f^{-1}$ exists, and we need to show that it is continuous.
		We have, from the bi-Lipschitzness of $f$, that $\frac{1}{L}||f^{-1}(f(x_1)) - f^{-1}(f(x_2))||_X \le ||f(x_1) - f(x_2)||_Y$, which implies that the inverse function is also Lipschitz, and hence also continuous.
		So $f$ is a homeomorphism, and we are done.
	\end{proof}

	For completeness, we also establish the following lemma, which is a well known corollary of the invariance of domain theorem;
	\begin{lem}
		If $n > m$, and $U$ is a non-empty open subset of $\R^n$, there is no injective, continuous mapping between $U$ and $\R^m$. In particular, therefore, $\R^n$ and $\R^m$ are not homeomorphic.
		\label{lem:invariance_of_dimension}
	\end{lem}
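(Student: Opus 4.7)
The plan is to reduce the lemma to Brouwer's invariance of domain theorem, which the paper already cites \citep{brouwer1911beweis}: any continuous injection $\phi$ from a nonempty open subset $V \subseteq \R^n$ into $\R^n$ has the property that $\phi(V)$ is open in $\R^n$. Granted this, the lemma becomes a short packaging argument, and the main labour lies elsewhere.

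For the main claim I would argue by contradiction. Suppose there exists a continuous injection $f : U \to \R^m$ with $m < n$ and $U \subseteq \R^n$ a nonempty open set. Lift $f$ to $\R^n$ via the canonical inclusion $\iota : \R^m \hookrightarrow \R^n$, $\iota(y_1, \dots, y_m) = (y_1, \dots, y_m, 0, \dots, 0)$. Then $\iota \circ f : U \to \R^n$ is continuous (composition of continuous maps) and injective (composition of injections), so invariance of domain forces $(\iota \circ f)(U)$ to be open in $\R^n$. But this image lies in the proper affine subspace $\R^m \times \{0\}^{\,n-m}$, which has empty interior in $\R^n$. Since $U$ is nonempty the image is nonempty, yielding the desired contradiction.

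For the second statement, any homeomorphism $\R^n \to \R^m$ would in particular restrict to a continuous injection from the open set $U = \R^n$ into $\R^m$, so the first part rules this out when $n > m$; the case $n < m$ follows by swapping the roles of $n$ and $m$ and applying the same argument to $f^{-1}$. Combined with Lemma~\ref{lem:bilip_homeomorphism} above, this is exactly what is needed to conclude Theorem~\ref{thm:bilip}: a bi-Lipschitz map $\R^n \to \R^m$ would make $\R^n$ homeomorphic to its image, which is a nonempty open subset of $\R^m$ by the invariance-of-domain argument applied to $f^{-1}$ (or, more directly, by reusing the lifting trick above).

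The main obstacle is not really in the surrounding bookkeeping but in our reliance on invariance of domain itself, which is a genuinely deep topological fact typically proved through singular homology or Brouwer degree theory. Since the paper already cites Brouwer for this, treating it as a black box is the natural strategy; once it is granted, the rest is essentially the observation that a lower-dimensional coordinate subspace has empty interior. No subtler topological input seems necessary, and the proof can be kept to a few lines.
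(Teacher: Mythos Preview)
Your proposal is correct and mirrors the paper's own argument almost exactly: both lift $f$ into $\R^n$ via the canonical inclusion $\iota$, invoke Brouwer's invariance of domain to force $(\iota\circ f)(U)$ open, and then obtain a contradiction from the fact that a proper coordinate hyperplane has empty interior. Your write-up is in fact a bit tidier, since you explicitly justify that $\iota\circ f$ is continuous and injective and you handle the $n<m$ direction of the ``not homeomorphic'' claim by symmetry, which the paper leaves implicit.
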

	\begin{proof}
		Recall the invariance of domain;

		\begin{thm}[\citet{brouwer1911beweis}]
			Let $U$ be an open subset of $\R^n$, and let $f : U \to \R^n$ be an injective continuous map.
			Then $V = f(U)$ is also open in $\R^n$.
			\label{thm:invariance_of_domain}
		\end{thm}
		which we will take as given.
		Suppose an injective, continuous function $f : U \to \R^m$ existed. It follows, then, that we could also consider an extension of $f$ as mapping from $\R^n \to \R^n$, by composing with a function of the form $[x_1, x_2, x_3, \dots x_m] \in \R^m \to [x_1,x_2, \dots x_m, 0, 0, \dots 0] \in \R^n$, for example. $f$ would then map from $U$ to the hyperplane $\R^m \subset \R^n$.
		But a hyperplane is not an open set; for any point $x$ in the hyperplane there is a point $x + \epsilon$ that is \emph{not} in the hyperplane, where we can make $\epsilon$ arbitrarily small.
		Using Theorem \ref{thm:invariance_of_domain}, however, we know that if $f$ is continuous and injective, then its image must be an open set.
		We therefore conclude by contradiction that $f: U \to \R^m$ cannot be a continuous, injective mapping, and so cannot be a homeomorphism.

	\end{proof}
	We have therefore established that if a bi-Lipschitz (and surjective) function exists between two spaces $X$ and $Y$, then $X$ and $Y$ are homeomorphic (Lemma \ref{lem:bilip_homeomorphism}).
	This provides a proof of our claim by contradiction; if a function $f : \R^n \mapsto \R^m$ existed and was bi-Lipschitz, then we would have shown that $\R^n$ and $\R^m$ were homeomorphic.
	But Lemma \ref{lem:invariance_of_dimension} shows that this is impossible, and therefore no such function can exist.

\end{proof}

\subsection{Proofs for section \ref{section:frequency_maths}}
\begin{proof}[ Proof of Lemma \ref{lem:conv_concentration} ]
	The proof of this is fairly straightforward, given the assumptions.
	Recall that these are that $f, g$ are normalised measures, and there exists a dominant interval $[x, x + L]$ in $f$ such that $C = \int_x^{x + L} f(x) \ge \int_y^{y + L} \forall y$.
	The mass of $f * g$ in the interval is
	\begin{align*}
		\int_x^{x+L} [f * g](x) dx & = \int_x^{x + L} \int_{-\infty}^{\infty} g(t) f(x - t) dt dx \\
		                           & =  \int_{-\infty}^{\infty} g(t) \int_x^{x+L} f(x - t) dx dt  \\
	\end{align*}
	However, by our concentration assumption, $\int_x^{x+L} f(x - t) dx = \int_{x+t}^{x + t + L} f(x') dx' \le C$.
	Therefore,
	\begin{align*}
		\int_x^{x+L} [f * g](x) dx & \le \int_{-\infty}^{\infty} g(t) C dt \\
		                           & \le C                                 \\
	\end{align*}
	due to the normalisation of $g$.
\end{proof}

\begin{proof}[Proof of Lemma \ref{lem:relu_contraction}]
	We can show that this result using the properties of a convolution, Lemma \ref{lem:conv_concentration}, and a construction reminiscent of the proof of the Mean Value Theorem.
	We need to establish some preliminaries.
	To begin, recall that a convolution $K * X$ is a linear operator, and denote simply by $[K *]$ the Toeplitz or block Toeplitz matrix that implements the convolution as a matrix multiplication.
	Recall also that, using this notation, $\nabla_X K * X = [K*]$
	Now, note that, using the fundamental theorem of calculus, we have that, for
	a general differentiable vector valued function $g(x)$ with Jacobian matrix $J(x)$,
	\begin{align*}
		g(x + v) - g(x) & = \int_{0}^{1} J(x + \lambda v) \cdot v d\lambda
	\end{align*}
	for any $x$ and $v$.
	We can use this to write the action of the ReLU in the frequency domain as
	\begin{align}
		\relu(X + V) - \relu(X) = \int_0^1 M(x + \lambda v) * V d\lambda
	\end{align}
	where we have used the fact that $\relu(X) = M(x) * X$, and so the Jacobian of the ReLU is just $[M(x) *]$ (another way to see this is to note that the ReLU is piecewise linear, so is always locally equal to its Jacobian).
	$M(x)$ denotes the Fourier transform of the image mask $ I(x > 0) $, as before.
	We are free to use this in the frequency domain, as the Fourier transform is simply a change of basis.
	Then, taking norms,
	\begin{align*}
		||\relu(X + V) - \relu(X)|| & = || \int_0^1 M(x + \lambda v) * V d\lambda ||    \\
		                            & \le  \int_0^1 || M(x + \lambda v) * V || d\lambda \\
	\end{align*}
	Where the second line follows from noting that the norm is convex and applying Jensen's inequality.

	Now, along the trajectory parameterised by $\lambda$ there must exist a maximal point $\lambda^\star := \arg\max_\lambda ||M(x + \lambda v) * V||$.
	We can then upper bound the integral by substituting this maximal value for the value of $||M(x + \lambda v) * V||$ in the integrand, that is
	\begin{align*}
		||\relu(X + V) - \relu(X)|| & \le  \int_0^1 || M(x + \lambda v) * V || d\lambda                                      \\
		                            & \le  ||M(x + \lambda^\star v) * V|| \int_0^1 d\lambda = ||M(x + \lambda^\star v) * V||
	\end{align*}

	So we can upper bound the distance in output space with the norm of a convolution.
	Now, let $Z = M(x + \lambda^\star v) * V$, and consider splitting both $Z$ into high and low frequency components, so $Z_l = H_u(Z), Z_h = Z - Z_l$.
	Note that these vectors are orthogonal and $Z = Z_l + Z_h$.
	We define $V_h$ and $V_l$ as the high and low components of $V$ in the same way.
	Note that the vector norm is invariant to taking absolute values elementwise, so we can consider the norm of $|Z| = |M(x + \lambda^\star v)| * |V|$, which allows these vectors to be treated as unnormalised measures. The fact that we can take the magnitude inside the convolution follows from the definition of convolution.

	We assumed that $V$ is low-band dominant, as in the conditions of Lemma \ref{lem:conv_concentration}, so $V_l$ is a dominant interval.
	It follows then that, applying this lemma,
	\begin{align*}
		\frac{||Z_l||^2}{||Z||} & \le \frac{||V_l||}{||V||}       \\
		||Z_l||^2               & \le ||V_l||\frac{||Z|| }{||V||}
	\end{align*}

	Now, recall that, by construction, $Z = M(x + \lambda^\star v)  * V$ is the inner product of the Jacobian of the ReLU with a vector $V$, that is, the matrix $[M(x + \lambda^\star v) *]$ is the Jacobian of the ReLU, evaluated at some point $x$.
	But the ReLU has a Lipschitz constant of 1, so the operator norm of the Jacobian must be less than or equal 1.
	Therefore, $\frac{||Z||}{||V||} \le 1$, and we are done.

\end{proof}

\begin{proof}[Proof of theorem \ref{thm:low_pass_bilip}]
	Despite the fact that this theorem is a more important result, its proof is essentially a corollary of Lemma \ref{lem:relu_contraction}.
	First, note that if this lemma applies, the ReLU reduces the low-frequency distances, so $||H_u(\relu(x))  - H_u(\relu(y)) || \le ||H_u(x) - H_u(y)||$.
	Convolutions, if we apply spectral normalisation so that the we have $\lip(g_i) < 1$ for all convolutions $g_i$ in the branch, must also reduce the low-frequency distances (since they act on all components independently),
	and so we must also have $||H_u(g(x)) - H_u(g(y))|| \le L ||H_u(x) - H_u(y)||$ for some $L < 1$, since $g$ is a composition of functions for which this is true.
	Using an argument similar to that in \citet{behrmann2019invertible}, we then can
	use the residual structure to obtain
	\begin{align*}
		||H_u(f(x)) - H_u(f(y))|| & = ||H_u(x + g(x)) - H_u(y + g(y))||                                                                                  \\
		                          & = || H_u(x) - H_u(y) - (H_u(g(x)) - H_u(g(y))) ||                                                                    \\
		                          & \ge ||H_u(x) - H_u(y)|| - ||H_u(g(x)) - H_u(g(y))|| &  & \textrm{Reverse triangle inequality}                        \\
		                          & \ge (1 - L)||H_u(x) - H_u(y)||                      &  & \textrm{Assuming lemma \ref{lem:relu_contraction} applies }
	\end{align*}
\end{proof}

\section{Related Work}

On aliasing and downsampling in neural networks, the work by \citet{zhang2019making} is the most relevant.
In this paper, we use their insights and suggested method, BlurPool.
In comparison, \citet{zhang2019making} do not discuss or analyse the impact of aliasing on uncertainty.

Spectral normalisation is used in a variety of situations, \citet{rosca2020case} show it has a beneficial effect for generalisation, robustness and GAN critics among others.
In this paper, we focus on using spectral normalisation in combination with residual connections for a bi-Lipschitz constraint.
This particular setup was introduced in \citet{liu2020simple}, and extended upon in \citet{mukhoti2021deterministic,van2021improving}, who show that this technique can obtain state of the art uncertainty estimation for classification and regression.
The core idea of all three methods is to compute a distance in feature space, this can be done through for example a Gaussian process or a kernel density.
A low likelihood under the density model indicates high uncertainty, and a Gaussian process with stationary kernel naturally increases its uncertainty for points that are far away of the inducing points.

The first practical, fully bi-Lipschitz (i.e. without downsampling) models, were introduced in \citet{behrmann2019invertible} who show that a standard ResNet architecture can be invertible if one avoids downsampling and enforces an upper Lipschitz constraint on the main branch of every residual block.
Inversion is done using fixed point iteration.
\citet{lu2021implicit} relax the Lipschitz constraint at the expense of more complex inversions, and \citet{sander2021momentum} introduce a momentum term which simplifies the ResNet architecture changes.
None of these methods considers the situation of downsampling, which makes them impractical to scale beyond low resolution images such as CIFAR-10

\section{Alternative feature collapse algorithm}

Here, we present an alternative feature collapse search algorithm.
This algorithm takes steps in the null space of the Jacobian, rather than forcing the function to be orthonormal to the gradient of $||y - y_0||$ as in Algorithm \ref{alg:find_counter_examples}.
This is straightforward; projecting to keep $||y - y_0||$ constant is only a heuristic, and more restrictive than it needs to be.
However, this algorithm requires to explicitly compute the right singular vectors of the Jacobian of the feature mapping.
For large feature spaces this is prohibitively expensive and not practical.
This algorithm is only provided for completeness.

\begin{algorithm}
	\caption{Given image $x_0$, find an image $x$ such that $f(x_0) \simeq f(x)$ and $||x_0 - x|| > 0$.}
	\label{alg:find_counter_examples_2}
	\begin{algorithmic}[0]
		\State \textbf{Input:} Input image $x_0$, feature mapping $f$, step size $\eta$, number of steps $n$, distribution over unit vectors $D$
		\State \textbf{Output:} Example image $x$
		\State $y_0 \gets f(x_0)$
		\State $r \gets D()$
		\State $x \gets x_0 + \eta v$
		\For{ $i \in 0..n-1$ }
		\State $r \gets D()$ \Comment{Sample random step direction}
		\State $J \gets \nabla_x f(x) $  \Comment{find the Jacobian of the feature mapping}
		\State $\_, \_, V \gets \textrm{svd}(J)$ \Comment{Compute the (compact) right singular vectors of the Jacobian (i.e those with non-zero singular values)}
		\State $u \gets r - \sum_i V[:, i]  (\langle V[:, i], r \rangle) $ \Comment{Project the step direction into the null space of the Jacobian}
		\State $x \gets x + \eta u$ \Comment{Take a step along the level set}
		\EndFor \\
		\Return $x$
	\end{algorithmic}
\end{algorithm}

\section{Additional Experimental Results}
\label{appendix:more_exps}
\FloatBarrier

\subsection{Additional verification results}

In the main paper, we provide the results of checking the conditions of Lemma \ref{lem:conv_concentration} and Theorem \ref{thm:low_pass_bilip} on Fashion MNIST.
Here, in Table \ref{table:theorem_checker} and Table \ref{tab:check_theorems_cifar10}, we provide analogous results for CIFAR10 and MNIST in addition.
These were omitted from the main paper due to space considerations, but the results are consistent across the three datasets tested.

\begin{table}[t]
	\centering
	\caption{Results of empirically checking the conditions of Lemma \ref{lem:conv_concentration} (low-frequency
		domination) and Theorem \ref{thm:low_pass_bilip} (low frequency contraction).
		Numbers are the proportion of the dataset for which we found the condition to hold exactly; so 1 means no violations were found, 0.5 means it was true for half the inputs tested, etc.
		We report means and standard error over 25 seeds for these datasets.
		We use a WideResNet with depth 10 and widen factor of 1 for these datasets.
	}
	\label{table:theorem_checker}
	\resizebox{0.9\columnwidth}{!}{%
		\begin{tabular}{@{}lllccccc@{}}
			\toprule
			                                                     & Dataset                 & $\lip(g)$        & $x$              & $f_1(x)$         & $f_2(x)$         & $f_3(x)$         & $f_4(x)$         \\ \midrule
			\multirow{14}{*}{Lemma \ref{lem:conv_concentration}} & \multirow{7}{*}{MNIST}  & 0.9              & $0.80  \pm 0.00$ & $0.18  \pm 0.04$ & $0.31  \pm 0.03$ & $0.63  \pm 0.02$ & $0.44  \pm 0.02$ \\
			                                                     &                         & 0.95             & $0.80  \pm 0.00$ & $0.18  \pm 0.04$ & $0.30  \pm 0.05$ & $0.62  \pm 0.04$ & $0.45  \pm 0.02$ \\
			                                                     &                         & 0.99             & $0.80  \pm 0.00$ & $0.20  \pm 0.03$ & $0.31  \pm 0.03$ & $0.63  \pm 0.02$ & $0.45  \pm 0.01$ \\
			                                                     &                         & 3.0              & $0.80  \pm 0.00$ & $0.23  \pm 0.02$ & $0.43  \pm 0.04$ & $0.71  \pm 0.01$ & $0.47  \pm 0.02$ \\
			                                                     &                         & 6.0              & $0.80  \pm 0.00$ & $0.22  \pm 0.03$ & $0.42  \pm 0.03$ & $0.71  \pm 0.01$ & $0.48  \pm 0.02$ \\
			                                                     &                         & 9.0              & $0.80  \pm 0.00$ & $0.22  \pm 0.04$ & $0.41  \pm 0.03$ & $0.71  \pm 0.01$ & $0.48  \pm 0.01$ \\
			                                                     &                         & no               & $0.80  \pm 0.00$ & $0.24  \pm 0.02$ & $0.39  \pm 0.05$ & $0.70  \pm 0.01$ & $0.46  \pm 0.02$ \\ \cmidrule(l){2-8}
			                                                     & \multirow{7}{*}{FMNIST}
			                                                     & 0.9                     & $0.86  \pm 0.00$ & $0.18  \pm 0.04$ & $0.24  \pm 0.04$ & $0.48  \pm 0.04$ & $0.47  \pm 0.03$                    \\
			                                                     &                         & 0.95             & $0.86  \pm 0.00$ & $0.18  \pm 0.04$ & $0.25  \pm 0.02$ & $0.48  \pm 0.03$ & $0.46  \pm 0.01$ \\
			                                                     &                         & 0.99             & $0.86  \pm 0.00$ & $0.18  \pm 0.05$ & $0.25  \pm 0.04$ & $0.51  \pm 0.02$ & $0.48  \pm 0.01$ \\
			                                                     &                         & 3.0              & $0.86  \pm 0.00$ & $0.24  \pm 0.03$ & $0.28  \pm 0.04$ & $0.53  \pm 0.04$ & $0.47  \pm 0.02$ \\
			                                                     &                         & 6.0              & $0.86  \pm 0.00$ & $0.24  \pm 0.03$ & $0.28  \pm 0.02$ & $0.53  \pm 0.02$ & $0.47  \pm 0.02$ \\
			                                                     &                         & 9.0              & $0.86  \pm 0.00$ & $0.25  \pm 0.01$ & $0.28  \pm 0.04$ & $0.54  \pm 0.04$ & $0.47  \pm 0.01$ \\
			                                                     &                         & no               & $0.86  \pm 0.00$ & $0.24  \pm 0.03$ & $0.25  \pm 0.03$ & $0.53  \pm 0.03$ & $0.47  \pm 0.01$ \\
			\midrule
			\multirow{14}{*}{Theorem \ref{thm:low_pass_bilip}}   & \multirow{7}{*}{MNIST}  & 0.9              & n/a              & $1.00 \pm 0.00$  & $1.00 \pm 0.00$  & $1.00 \pm 0.00$  & $1.00 \pm 0.00$  \\
			                                                     &                         & 0.95             & n/a              & $1.00 \pm 0.00$  & $1.00 \pm 0.00$  & $1.00 \pm 0.00$  & $1.00 \pm 0.00$  \\
			                                                     &                         & 0.99             & n/a              & $1.00 \pm 0.00$  & $1.00 \pm 0.00$  & $1.00 \pm 0.00$  & $1.00 \pm 0.00$  \\
			                                                     &                         & 3.0              & n/a              & $0.69 \pm 0.31$  & $0.96 \pm 0.11$  & $1.00 \pm 0.00$  & $1.00 \pm 0.00$  \\
			                                                     &                         & 6.0              & n/a              & $0.55 \pm 0.26$  & $0.98 \pm 0.05$  & $1.00 \pm 0.00$  & $1.00 \pm 0.00$  \\
			                                                     &                         & 9.0              & n/a              & $0.61 \pm 0.28$  & $0.99 \pm 0.03$  & $1.00 \pm 0.00$  & $1.00 \pm 0.00$  \\
			                                                     &                         & no               & n/a              & $0.60 \pm 0.30$  & $0.99 \pm 0.02$  & $1.00 \pm 0.00$  & $1.00 \pm 0.00$  \\ \cmidrule(l){2-8}
			                                                     & \multirow{7}{*}{FMNIST} & 0.9              & n/a              & $1.00 \pm 0.00$  & $1.00 \pm 0.00$  & $1.00 \pm 0.00$  & $1.00 \pm 0.00$  \\
			                                                     &                         & 0.95             & n/a              & $1.00 \pm 0.00$  & $1.00 \pm 0.00$  & $1.00 \pm 0.00$  & $1.00 \pm 0.00$  \\
			                                                     &                         & 0.99             & n/a              & $1.00 \pm 0.00$  & $1.00 \pm 0.00$  & $1.00 \pm 0.00$  & $1.00 \pm 0.00$  \\
			                                                     &                         & 3.0              & n/a              & $0.98 \pm 0.04$  & $1.00 \pm 0.00$  & $1.00 \pm 0.00$  & $1.00 \pm 0.00$  \\
			                                                     &                         & 6.0              & n/a              & $0.95 \pm 0.11$  & $0.92 \pm 0.04$  & $1.00 \pm 0.00$  & $1.00 \pm 0.00$  \\
			                                                     &                         & 9.0              & n/a              & $1.00 \pm 0.01$  & $0.85 \pm 0.12$  & $1.00 \pm 0.00$  & $1.00 \pm 0.00$  \\
			                                                     &                         & no               & n/a              & $1.00 \pm 0.00$  & $0.86 \pm 0.07$  & $1.00 \pm 0.00$  & $1.00 \pm 0.00$  \\ \bottomrule
		\end{tabular}
	}
\end{table}

\begin{table}
	\centering
	\caption{Verifying theories on CIFAR10 - see the caption of Table \ref{table:theorem_checker} for a more detailed description.
		We report means and standard errors over 5 seeds for this dataset.
		We use a WideResNet with a depth of 28 and widen factor of 10.
	}
	\label{tab:check_theorems_cifar10}
	\resizebox{1.0\columnwidth}{!}{%
		\begin{tabular}{llccccccc}
			\toprule
			\toprule
			 & $\lip(g)$ & $x$              & $f_1(x)$         & $f_2(x)$         & $f_3(x)$         & $f_4(x)$         & $f_5(x)$         & $f_6(x)$         \\
			\cmidrule(l){2-9}
			\multirow{14}{*}{Lemma 1}
			 & 0.9       & $0.66  \pm 0.00$ & $0.53  \pm 0.11$ & $0.56  \pm 0.20$ & $0.41  \pm 0.13$ & $0.41  \pm 0.13$ & $0.41  \pm 0.12$ & $0.67  \pm 0.01$ \\
			 & 0.95      & $0.66  \pm 0.00$ & $0.53  \pm 0.03$ & $0.41  \pm 0.19$ & $0.35  \pm 0.07$ & $0.37  \pm 0.06$ & $0.39  \pm 0.04$ & $0.67  \pm 0.06$ \\
			 & 0.99      & $0.66  \pm 0.00$ & $0.52  \pm 0.03$ & $0.35  \pm 0.17$ & $0.30  \pm 0.05$ & $0.30  \pm 0.04$ & $0.33  \pm 0.04$ & $0.65  \pm 0.01$ \\
			 & 3.0       & $0.66  \pm 0.00$ & $0.54  \pm 0.03$ & $0.26  \pm 0.02$ & $0.19  \pm 0.03$ & $0.20  \pm 0.02$ & $0.26  \pm 0.02$ & $0.64  \pm 0.01$ \\
			 & 6.0       & $0.66  \pm 0.00$ & $0.94  \pm 0.00$ & $0.36  \pm 0.03$ & $0.28  \pm 0.09$ & $0.21  \pm 0.06$ & $0.25  \pm 0.03$ & $0.62  \pm 0.02$ \\
			 & 9.0       & $0.66  \pm 0.00$ & $0.88  \pm 0.13$ & $0.43  \pm 0.04$ & $0.27  \pm 0.05$ & $0.20  \pm 0.04$ & $0.18  \pm 0.02$ & $0.60  \pm 0.01$ \\
			 & no        & $0.66  \pm 0.00$ & $0.92  \pm 0.03$ & $0.41  \pm 0.04$ & $0.24  \pm 0.07$ & $0.17  \pm 0.02$ & $0.18  \pm 0.02$ & $0.60  \pm 0.01$ \\

			\cmidrule(l){2-9}
			 & $\lip(g)$ & $f_7(x)$         & $f_8(x)$         & $f_9(x)$         & $f_{10}(x)$      & $f_{11}(x)$      & $f_{12}(x)$      & $f_{13}(x)$      \\
			\cmidrule(l){2-9}
			 & 0.9       & $0.69  \pm 0.03$ & $0.71  \pm 0.03$ & $0.78  \pm 0.04$ & $0.85  \pm 0.08$ & $0.77  \pm 0.01$ & $0.78  \pm 0.03$ & $0.94  \pm 0.01$ \\
			 & 0.95      & $0.67  \pm 0.03$ & $0.69  \pm 0.02$ & $0.77  \pm 0.05$ & $0.81  \pm 0.09$ & $0.76  \pm 0.02$ & $0.76  \pm 0.03$ & $0.95  \pm 0.02$ \\
			 & 0.99      & $0.67  \pm 0.01$ & $0.69  \pm 0.01$ & $0.75  \pm 0.04$ & $0.81  \pm 0.08$ & $0.77  \pm 0.01$ & $0.78  \pm 0.02$ & $0.93  \pm 0.02$ \\
			 & 3.0       & $0.63  \pm 0.02$ & $0.66  \pm 0.02$ & $0.72  \pm 0.01$ & $0.76  \pm 0.02$ & $0.84  \pm 0.02$ & $0.81  \pm 0.04$ & $0.83  \pm 0.03$ \\
			 & 6.0       & $0.63  \pm 0.01$ & $0.65  \pm 0.01$ & $0.73  \pm 0.02$ & $0.85  \pm 0.07$ & $0.77  \pm 0.01$ & $0.76  \pm 0.05$ & $0.82  \pm 0.04$ \\
			 & 9.0       & $0.63  \pm 0.01$ & $0.66  \pm 0.02$ & $0.69  \pm 0.01$ & $0.69  \pm 0.06$ & $0.76  \pm 0.01$ & $0.71  \pm 0.05$ & $0.85  \pm 0.02$ \\
			 & no        & $0.63  \pm 0.01$ & $0.68  \pm 0.02$ & $0.70  \pm 0.01$ & $0.75  \pm 0.10$ & $0.74  \pm 0.03$ & $0.81  \pm 0.01$ & $0.74  \pm 0.02$ \\
			\midrule
			\midrule
			 & $\lip(g)$ & $x$              & $f_1(x)$         & $f_2(x)$         & $f_3(x)$         & $f_4(x)$         & $f_5(x)$         & $f_6(x)$         \\
			\cmidrule(l){2-9}
			\multirow{14}{*}{Theorem 1}
			 & 0.9       & $\backslash$     & $1.00  \pm 0.00$ & $1.00  \pm 0.00$ & $1.00  \pm 0.00$ & $1.00  \pm 0.00$ & $1.00  \pm 0.00$ & $1.00  \pm 0.00$ \\
			 & 0.95      & $\backslash$     & $1.00  \pm 0.00$ & $1.00  \pm 0.00$ & $1.00  \pm 0.00$ & $1.00  \pm 0.00$ & $1.00  \pm 0.00$ & $1.00  \pm 0.00$ \\
			 & 0.99      & $\backslash$     & $1.00  \pm 0.00$ & $1.00  \pm 0.00$ & $1.00  \pm 0.00$ & $1.00  \pm 0.00$ & $1.00  \pm 0.00$ & $1.00  \pm 0.00$ \\
			 & 3.0       & $\backslash$     & $1.00  \pm 0.00$ & $1.00  \pm 0.00$ & $1.00  \pm 0.00$ & $1.00  \pm 0.00$ & $1.00  \pm 0.00$ & $1.00  \pm 0.00$ \\
			 & 6.0       & $\backslash$     & $1.00  \pm 0.00$ & $1.00  \pm 0.00$ & $1.00  \pm 0.00$ & $1.00  \pm 0.00$ & $1.00  \pm 0.00$ & $1.00  \pm 0.00$ \\
			 & 9.0       & $\backslash$     & $1.00  \pm 0.00$ & $1.00  \pm 0.00$ & $1.00  \pm 0.00$ & $1.00  \pm 0.00$ & $1.00  \pm 0.00$ & $1.00  \pm 0.00$ \\
			 & no        & $\backslash$     & $1.00  \pm 0.00$ & $1.00  \pm 0.00$ & $1.00  \pm 0.00$ & $1.00  \pm 0.00$ & $1.00  \pm 0.00$ & $1.00  \pm 0.00$ \\
			\cmidrule(l){2-9}
			 & $\lip(g)$ & $f_7(x)$         & $f_8(x)$         & $f_9(x)$         & $f_{10}(x)$      & $f_{11}(x)$      & $f_{12}(x)$      & $f_{13}(x)$      \\
			\cmidrule(l){2-9}
			 & 0.9       & $1.00  \pm 0.00$ & $1.00  \pm 0.00$ & $1.00  \pm 0.00$ & $1.00  \pm 0.00$ & $1.00  \pm 0.00$ & $1.00  \pm 0.00$ & $1.00  \pm 0.00$ \\
			 & 0.95      & $1.00  \pm 0.00$ & $1.00  \pm 0.00$ & $1.00  \pm 0.00$ & $1.00  \pm 0.00$ & $1.00  \pm 0.00$ & $1.00  \pm 0.00$ & $1.00  \pm 0.00$ \\
			 & 0.99      & $1.00  \pm 0.00$ & $1.00  \pm 0.00$ & $1.00  \pm 0.00$ & $1.00  \pm 0.00$ & $1.00  \pm 0.00$ & $1.00  \pm 0.00$ & $1.00  \pm 0.00$ \\
			 & 3.0       & $1.00  \pm 0.00$ & $1.00  \pm 0.00$ & $1.00  \pm 0.00$ & $1.00  \pm 0.00$ & $1.00  \pm 0.00$ & $1.00  \pm 0.00$ & $1.00  \pm 0.00$ \\
			 & 6.0       & $1.00  \pm 0.00$ & $1.00  \pm 0.00$ & $1.00  \pm 0.00$ & $1.00  \pm 0.00$ & $1.00  \pm 0.00$ & $1.00  \pm 0.00$ & $1.00  \pm 0.00$ \\
			 & 9.0       & $1.00  \pm 0.00$ & $1.00  \pm 0.00$ & $1.00  \pm 0.00$ & $1.00  \pm 0.00$ & $1.00  \pm 0.00$ & $1.00  \pm 0.00$ & $1.00  \pm 0.00$ \\
			 & no        & $1.00  \pm 0.00$ & $1.00  \pm 0.00$ & $1.00  \pm 0.00$ & $1.00  \pm 0.00$ & $1.00  \pm 0.00$ & $1.00  \pm 0.00$ & $1.00  \pm 0.00$ \\
			\bottomrule
			\bottomrule
		\end{tabular}
	}

\end{table}

\subsection{Theorem check during training}

In Figure \ref{fig:theorem_check_training_full}, we show the same information as Figure \ref{fig:theorem_check_training_small} in the main text, but for all the residual blocks of the network.
This shows that the blocks in that figure are representative of the model as a whole, but we were unable to fit the full figure in the main body.

\begin{figure}
	\centering
	\includegraphics[width=\textwidth]{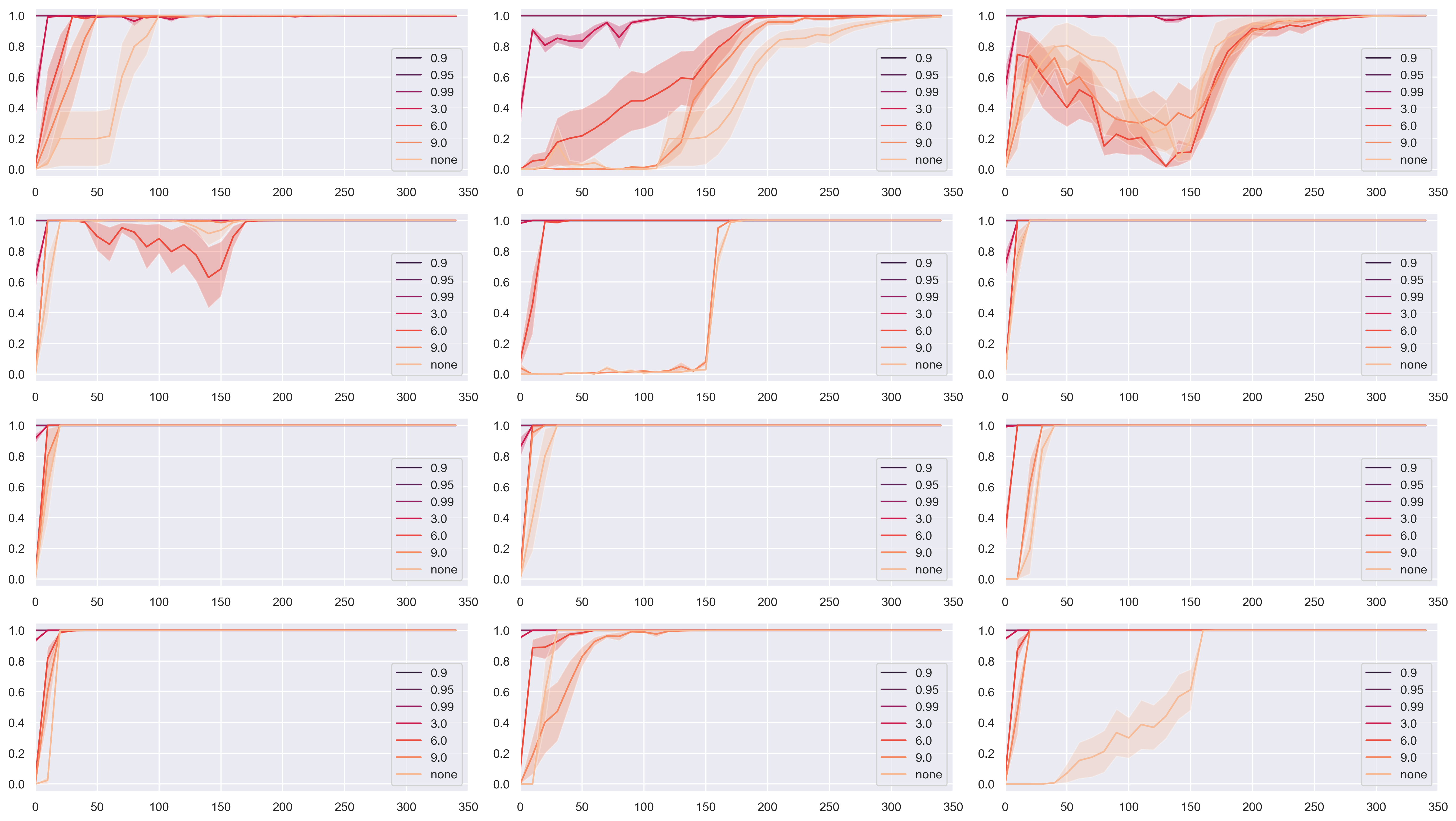}
	\caption{
		The same as Figure \ref{fig:theorem_check_training_small}, but including all blocks of the model.
		As can be seen, the observation of convergence to the result of the theorem mostly being true holds across all residual blocks.
		The blocks are shown in order from left to right and top to bottom.
	}
	\label{fig:theorem_check_training_full}
\end{figure}

\subsection{AUROC and test accuracy}

In Table \ref{tab:test_acc}, we show the accuracy and OOD performance, as measured by AUROC on an out-of distribution dataset, in Table~\ref {tab:auroc}.
These model do not obtain SotA accuracy, but we include them to demonstrate that we achieve competitive accuracy, in order to demonstrate that our training setup is reasonable and our architectural changes do not have a major effect on the performance of our models.

\begin{table}
	\centering
	\caption{OOD detection results (AUROC).}
	\label{tab:auroc}
	\resizebox{1.0\columnwidth}{!}{%
		\begin{tabular}{llccccccc}
			\toprule
			Ind dataset & OoD dataset & $\lip(g)=0.9$     & $\lip(g)=0.95$    & $\lip(g)=0.99$    & $\lip(g)=3.0$     & $\lip(g)=6.0$     & $\lip(g)=9.0$     & No spec norm      \\
			\midrule
			MNIST       & FMNIST      & $99.98  \pm 0.01$ & $99.98  \pm 0.01$ & $99.98  \pm 0.01$ & $99.97  \pm 0.01$ & $99.97  \pm 0.01$ & $99.97  \pm 0.01$ & $99.97  \pm 0.01$ \\
			FMNIST      & MNIST       & $98.75  \pm 0.24$ & $98.50  \pm 0.47$ & $98.67  \pm 0.35$ & $97.89  \pm 1.21$ & $98.25  \pm 0.52$ & $98.10  \pm 0.96$ & $98.28  \pm 0.46$ \\
			CIFAR10     & SVHN        & $93.47  \pm 2.81$ & $91.66  \pm 1.12$ & $93.27  \pm 2.63$ & $94.07  \pm 0.70$ & $92.97  \pm 2.47$ & $92.31  \pm 2.17$ & $95.64  \pm 1.58$ \\
			CIFAR10     & CIFAR100    & $86.92  \pm 0.84$ & $86.75  \pm 0.64$ & $87.28  \pm 0.64$ & $89.74  \pm 0.55$ & $89.22  \pm 0.52$ & $89.72  \pm 0.29$ & $90.16  \pm 0.56$ \\
			\bottomrule
		\end{tabular}
	}
\end{table}

\begin{table}
	\centering
	\caption{Test accuracy of the pre-trained models.}
	\label{tab:test_acc}
	\begin{tabular}{lccc}
		\toprule
		$\lip(g)$ & MNIST             & FMNIST            & CIFAR10           \\
		\midrule
		0.9       & $99.49  \pm 0.05$ & $90.64  \pm 0.35$ & $94.54  \pm 0.13$ \\
		0.95      & $99.55  \pm 0.08$ & $90.74  \pm 0.30$ & $94.64  \pm 0.20$ \\
		0.99      & $99.54  \pm 0.07$ & $90.97  \pm 0.22$ & $94.85  \pm 0.24$ \\
		3.0       & $99.58  \pm 0.08$ & $92.46  \pm 0.22$ & $95.75  \pm 0.08$ \\
		6.0       & $99.56  \pm 0.10$ & $92.97  \pm 2.47$ & $95.54  \pm 0.13$ \\
		9.0       & $99.55  \pm 0.07$ & $92.21  \pm 0.21$ & $95.60  \pm 0.22$ \\
		no        & $99.54  \pm 0.06$ & $92.16  \pm 0.48$ & $95.49  \pm 0.15$ \\
		\bottomrule
	\end{tabular}
\end{table}

\subsection{Feature space attack}

In Table \ref{table:feature_space_distance}, we provide a more detailed numerical supplement to Figure \ref{fig:adv_trajectory}.
This shows results for a standard model, rather than with low-pass filters inserted between the residual blocks.
Consistent with Figure \ref{fig:adv_trajectory}, we observe that for standard models spectral normalisation tends to increase the distance in feature space distance of this attack, but we are unable to observe the low-frequency/high-frequency delta predicted by our theory without explicitly enforcing the domination assumption, suggesting that the adversarial search is able to find counter-examples to this condition.

\begin{table}
	\centering
	\caption{Feature space movement of the adversarial attack on CIFAR10.}
	\label{table:feature_space_distance}
	\begin{tabular}{lcccccc}
		\toprule
		          & \multicolumn{2}{c}{All freq} & \multicolumn{2}{c}{High freq} & \multicolumn{2}{c}{Low freq}                                                          \\
		$\lip(g)$ & $|x-x_0|$                    & $|y-y_0|$                     & $|x-x_0|$                    & $|y-y_0|$        & $|x-x_0|$        & $|y-y_0|$        \\
		\midrule
		0.9       & $4.47  \pm 0.00$             & $0.89  \pm 0.06$              & $4.47  \pm 0.01$             & $0.90  \pm 0.06$ & $4.45  \pm 0.02$ & $0.63  \pm 0.05$ \\
		0.95      & $4.47  \pm 0.01$             & $0.87  \pm 0.03$              & $4.47  \pm 0.00$             & $0.89  \pm 0.03$ & $4.47  \pm 0.02$ & $0.63  \pm 0.03$ \\
		0.99      & $4.47  \pm 0.00$             & $0.92  \pm 0.06$              & $4.47  \pm 0.01$             & $0.92  \pm 0.06$ & $4.48  \pm 0.01$ & $0.64  \pm 0.03$ \\
		3.0       & $4.47  \pm 0.00$             & $0.75  \pm 0.04$              & $4.47  \pm 0.00$             & $0.76  \pm 0.04$ & $4.47  \pm 0.03$ & $0.55  \pm 0.04$ \\
		6.0       & $4.47  \pm 0.00$             & $0.71  \pm 0.05$              & $4.47  \pm 0.00$             & $0.72  \pm 0.06$ & $4.46  \pm 0.03$ & $0.50  \pm 0.04$ \\
		9.0       & $4.47  \pm 0.01$             & $0.62  \pm 0.02$              & $4.47  \pm 0.00$             & $0.63  \pm 0.03$ & $4.47  \pm 0.01$ & $0.45  \pm 0.02$ \\
		no        & $4.48  \pm 0.01$             & $0.67  \pm 0.04$              & $4.47  \pm 0.00$             & $0.69  \pm 0.04$ & $4.46  \pm 0.02$ & $0.48  \pm 0.02$ \\
		\bottomrule
	\end{tabular}
\end{table}

\FloatBarrier

\section{Architectural and Training details}
\label{appendix:architectural_details}

We make several small modifications to the standard ResNet architecture in order to make our theoretical analysis easier to verify.
Firstly, we move BatchNorm from the residual branches and place it between residual blocks instead, as we found that regularising BatchNorm to have Lipschitz < 1, while fairly easy to implement \citep{gouk2018regularisation}, was difficult to train because this significantly changes the dynamics of how BatchNorm is supposed to behave.
It is common to use a strided 1x1 convolution on the skip connection when the residual branch also has a downsampling operation.
This goes against our assumption that the residual connection is an identity.
However, a residual connection which simply performs downsampling with BlurPool is an identity on the low frequency components of the input.
Therefore, in order to make the structure of the network conform to our analysis, we replace the 1x1 convolution on the skip connection with BlurPool operation.
To handle the increasing number of channels, we use zero padding.
This operations is isometric (that is, $||x - y|| = ||f(x) - f(y)|| \forall x, y$), and so all proofs apply.
These changes make it simpler to verify our theoretical conditions by comparing distances between feature maps before and after the residual connection.
We use BlurPool in all convolution layers that perform downsampling, as described in \citet{zhang2019making}.

We use the standard train/test splits for all datasets used.
We use SGD with momentum 0.9 and a learning rate schedule to train all models; the initial learning rate is 0.1.
On CIFAR10, we divide this by 10 at epochs 150 and 250 and train for 350 epochs in total.
For MNIST and FashionMNIST, we train for 200 epochs and divide the learning rate by 5 at epochs 60, 120 and 160.
These are following the schedules used in \citet{mukhoti2021deterministic}.
We do this to produce trained models which can reasonably be said to be representative; though our accuracy and AUROC are not SotA, they are comparable to the results achieved in the literature.

We trained our models on an internal cluster, using a mix of GeForce 1080's, 2080's and Titan RTX's.
We did not record the exact amount of compute time used, but it was relatively modest - on the order of a few GPU days to generate all listed results.

\end{document}